\documentclass[twoside]{article}

\usepackage[accepted]{aistats2021}

\usepackage[utf8]{inputenc} 
\usepackage[T1]{fontenc}    
\usepackage{hyperref}       
\usepackage{url}            
\usepackage{booktabs}       
\usepackage{amsfonts}       
\usepackage{nicefrac}       
\usepackage{microtype}      
\usepackage{hyperref}
\usepackage{url}
\usepackage{amssymb} 
\usepackage{amsthm} 
\usepackage{amsmath}
\usepackage{epstopdf} 
\usepackage{float}
\usepackage{mathtools}
\usepackage{natbib}
\usepackage{xcolor}
\usepackage{subcaption}
\usepackage{adjustbox}
\usepackage{stackengine}
\usepackage{multirow}
\usepackage{dirtytalk}

\usepackage[ruled, lined, longend]{algorithm2e}

\captionsetup{justification=centering}
\bibliographystyle{plainnat}
\definecolor{bluecustom}{rgb}{0.12156862745098039,0.4666666666666667,0.7058823529411765}
\definecolor{orangecustom}{rgb}{1.0,0.4980392156862745,0.054901960784313725}
\definecolor{greencustom}{rgb}{0.17254901960784313,0.6274509803921569, 0.17254901960784313}
\definecolor{redcustom}{rgb}{0.8392156862745098,0.15294117647058825, 0.1568627450980392}

\newcommand{\cD}{\mathcal{D}}

\newcommand{\cH}{\mathcal{H}}

\newcommand{\cL}{\mathcal{L}}

\newcommand{\cN}{\mathcal{N}}

\newcommand{\cX}{\mathcal{X}}
\newcommand{\cY}{\mathcal{Y}}

\newcommand{\w}{\omega}
\newcommand{\tl}{\tilde}

\newcommand{\algoname}{PCA-OGD\xspace}




\newcommand\cut[1]{}





\newcommand{\squishlist}{
	\begin{list}{$\bullet$}
		{ \setlength{\itemsep}{0pt}      \setlength{\parsep}{3pt}
			\setlength{\topsep}{3pt}       \setlength{\partopsep}{0pt}
			\setlength{\leftmargin}{1.5em} \setlength{\labelwidth}{1em}
			\setlength{\labelsep}{0.5em} } }

	\newcommand{\squishlisttwo}{
		\begin{list}{$\bullet$}
			{ \setlength{\itemsep}{0pt}    \setlength{\parsep}{0pt}
				\setlength{\topsep}{0pt}     \setlength{\partopsep}{0pt}
				\setlength{\leftmargin}{2em} \setlength{\labelwidth}{1.5em}
				\setlength{\labelsep}{0.5em} } }

		\newcommand{\squishend}{
		\end{list}  }









	{}
	\newtheorem{thm}{Theorem}{}
	{}
	\newtheorem{corr}{Corollary}{}
	\newtheorem{lemma}{Lemma}{}
	{}
	\newtheorem{defn}{Definition}{}
	\newtheorem{remark}{Remark}{}

	\DeclareMathOperator*{\argmin}{arg\,min}





	

	\newcommand{\myvec}[1]{\mbox{$\mathbf{#1}$}}

	\newcommand{\vg}{\mbox{$\myvec{g}$}}

	\newcommand{\vecu}{\mbox{$\myvec{u}$}}
	\newcommand{\vv}{\mbox{$\myvec{v}$}}

	\newcommand{\vx}{\mbox{$\myvec{x}$}}



	







	

	\newcommand{\be}{\begin{equation}}
	\newcommand{\ee}{\end{equation}}
	\newcommand{\bea}{\begin{eqnarray}}
	\newcommand{\eea}{\end{eqnarray}}
	\newcommand{\beaa}{\begin{eqnarray*}}
	\newcommand{\eeaa}{\end{eqnarray*}}

	\newcommand{\norm}[1]{\left\lVert#1\right\rVert}


\newcommand{\reals}{\mbox{$\mathbb{R}$}}

\newcommand{\naturals}{\mbox{$\mathbb{N}$}}

\usepackage{accents}

	\newcommand{\cT}{\mbox{$\mathcal{T}$}}
	\newcommand{\cS}{\mbox{$\mathcal{S}$}}

\makeatletter
\makeatother

%
%


\setlength{\pdfpageheight}{11in}
\setlength{\pdfpagewidth}{8.5in}



\usepackage{lipsum}

\newcommand\blfootnote[1]{%
  \begingroup
  \renewcommand\thefootnote{}\footnote{#1}%
  \addtocounter{footnote}{-1}%
  \endgroup
}

\begin{document}

\newcommand\mycommfont[1]{\small\ttfamily\textcolor{blue}{#1}}
\SetCommentSty{mycommfont}
%

%

\twocolumn[

\aistatstitle{A Theoretical Analysis of Catastrophic Forgetting through the NTK Overlap Matrix}

\aistatsauthor{ Thang Doan \footnotemark[1] \footnotemark[2]
\And Mehdi Bennani \footnotemark[3]
\And  Bogdan Mazoure  \footnotemark[1] \footnotemark[2]
\AND Guillaume Rabusseau  \footnotemark[2] \footnotemark[4]
\And Pierre Alquier \footnotemark[5] } \vspace{1cm}
]

\begin{abstract}
    Continual learning (CL) is a setting in which an agent has to learn from an incoming stream of data during its entire lifetime. Although major advances have been made in the field, one recurring problem which remains unsolved is that of \emph{Catastrophic Forgetting} (CF). While the issue has been extensively studied empirically, little attention has been paid from a theoretical angle. In this paper, we show that the impact of CF increases as two tasks increasingly align. We introduce a measure of task similarity called the \emph{NTK overlap matrix} which is at the core of CF. We analyze common projected gradient algorithms and demonstrate how they mitigate forgetting. Then, we propose a variant of Orthogonal Gradient Descent (OGD) which leverages structure of the data through Principal Component Analysis (PCA). Experiments support our theoretical findings and show how our method can help reduce CF on classical CL datasets. 
\end{abstract}


\section{Introduction}

Continual learning (CL) or lifelong learning   \citep{thrun1995lifelong,chen2018lifelong} has been one of the most
important milestone on the path to building artificial general intelligence \citep{agi}.
This setting refers to learning from an incoming stream of data, as well as leveraging previous knowledge for future tasks (through forward-backward transfer \citep{gem}).
While the topic has seen increasing interest in the past years~\citep{de2019defying_forgetting,parisi2019icarl} and a
number of sohpisticated methods have been  developed~\citep{ewc,gem,agem,aljundi2019gradient}, a yet unsolved central
challenge remains: Catastrophic Forgetting (CF) \citep{goodfellow2013empirical,mccloskey1989catastrophic}.

CF occurs when past solutions degrade while learning from new incoming tasks according to non-stationary distributions.
Previous work either investigated this phenomenon empirically at different granularity levels (task level
\citep{nguyen2019forgetting}, neural network representations level \citep{ramasesh2020anatomy} ),
or proposed a quantitative metric \citep{farquhar2018towards,kemker2017measuring,nguyen2020explaining}.

Despite the vast set of existing works on CF, there is still few theoretical works studying this major topic. Recently, \cite{bennani2020generalisation} propose a framework to study Continual Learning in the NTK regime then derive generalization guarantees of CL under the Neural Tangent 
Kernel \cite[NTK]{jacot2018neural} for Orthogonal Gradient Descent \cite[OGD]{farajtabar2020orthogonal}.
Following on this work, we propose a theoretical analysis of Catastrophic Forgetting for a family of
projection algorithms including OGD, GEM \citep{gem}.
Our contributions can be summarized as follows:
\blfootnote{$^1$McGill University    $^2$Mila $^3$Aqemia $^4$Université de Montréal $^5$RIKEN AIP \\ 
corresponding author: thang.doan@mail.mcgill.ca}
\begin{itemize}
    \item We provide a general definition of Catastrophic Forgetting, and examine the special case of CF  under the Neural Tangent Kernel (NTK) regime. Our definition leverages the similarity between the source and target task. 
    \item We derive the expression of the forgetting error for a family of orthogonal projection methods based on the \emph{NTK overlap matrix}. This matrix reduces to the angle between the source and target tasks and is a critical component responsible for the Catastrophic Forgetting.
    \item For these projection methods, we analyze their mechanisms to reduce Catastrophic Forgetting and how they 
    differ from each other. 
    \item We propose PCA-OGD, an extension 
    of OGD which mitigates the CF issue by compressing the relevant information into a reduced number of principal components.
    We show that our method is advantageous whenever the dataset has a dependence pattern between tasks.
\end{itemize}

\begin{figure}
    \centering
    \includegraphics[width=\linewidth]{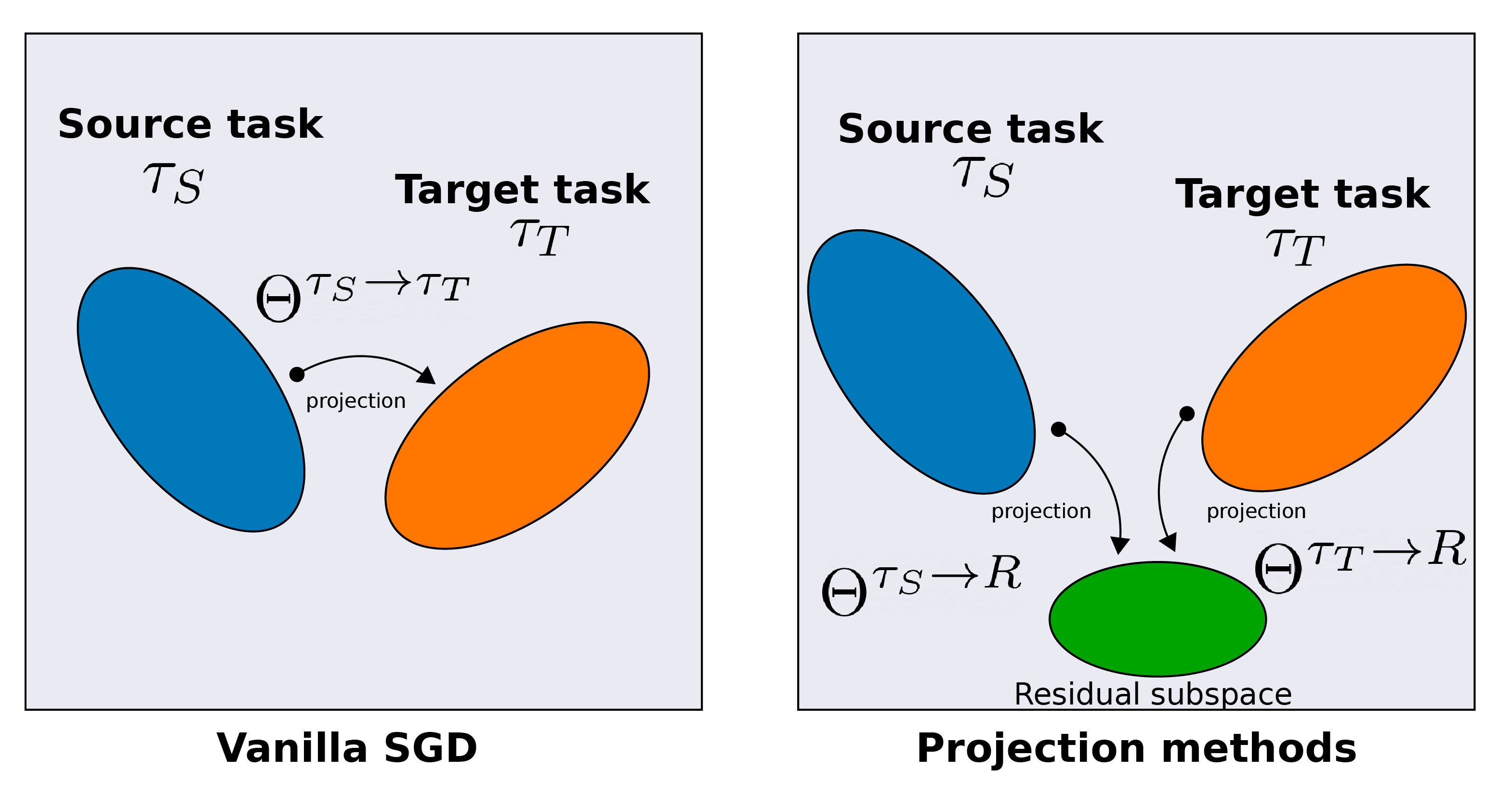}
    \caption{Unlike SGD, the projection methods reduce the forgetting by projecting the source and target tasks on a residual subspace.}
    \label{fig:my_label}
\end{figure}

\section{Related Works}


Defying Catastrophic Forgetting \citep{mccloskey1989catastrophic} has always been an important challenge  for the
Continual Learning community.
Among different families of methods, we can cite the following: regularization-based methods \citep{ewc,zenke2017continual},
memory-based and projection methods \citep{agem,gem,farajtabar2020orthogonal} or parameters isolations \citep{mallya2018packnet,rosenfeld2018incremental}. In \citep{pan2020continual}, the authors propose a method to identify memorable example from the past that must be stored.
An exhaustive list can be found in \citep{de2019defying_forgetting}. 
Although theses methods achieve more or less success in combating Catastrophic Forgetting, its underlying theory remains unclear.

Recently, a lot of efforts has been put toward dissecting CF \citep{toneva2018empirical}.
While \cite{nguyen2019forgetting} empirically studied the impact of tasks similarity on the forgetting, \citep{ramasesh2020anatomy} analyzed this phenomenon at the neural network layers level. \citep{xie2020artificial} studied the designed \textit{artificial neural variability} and studied its impact on the forgetting. \cite{mirzadeh2020understanding} investigated how different training regimes affected the forgetting.
Other streams of works investigated different evaluation protocol and measure of CF \citep{farquhar2018towards,kemker2017measuring}.
That being said, there is still few theoretical work confirming empirical evidences of CF.

\cite{yin2020sola} provide an analysis of CL from a loss landscape perspective through a second-order Taylor approximation.
Recent advances towards understanding neural networks behavior \citep{jacot2018neural} has enabled a better understanding through Neural Tangent Kernel (NTK) \citep{du2018gradient,arora2019exact}.
Latest work and important milestone towards better theoretical understand of CL is from \cite{bennani2020generalisation}.
The authors provide a theoretical framework for CL under the NTK regime for the infinite memory case. Our work relaxes this constraint to the \emph{finite memory} case, which is more applicable in the empirical setting.

\section{Preliminaries}

\subsection{Notations}

We use $\norm{\cdot}_2$ to denote the Euclidian norm of a vector or
the spectral norm of a matrix. We use $\langle\cdot,
\cdot\rangle$ for the Euclidean dot product, and $\langle\cdot,\cdot\rangle_{\cH}$ the dot product in the Hilbert space $\cH$. We
index the task ID by $\tau$. Learnable parameter are denoted $\w$ and when indexing as $\w_{\tau}$ correspond to the training during task $\tau$. Moreover $\star$ represents the variable at the end of a given task, i.e $w_{\tau}^{*}$ represents the learned parameters at the end of task $\tau$. 

We denote $\naturals$ the set of natural numbers, $\reals$ the space of real numbers and $\naturals^{\star}$
for the set $\naturals\smallsetminus\{0\}$.

\subsection{Continual Learning}
Let $\cX$ be some feature space of interest (we take $\cX=\mathbb{R}^p$), and let $\cY$ be the space of labels (we let $\cY=\mathbb{R}$, but $\cY=\Delta^K$ can be used for classification\footnote{$\Delta^K$ denotes the vertices of the probability simplex of dimension $K$}).
In CL, we receive a stream of supervised learning tasks $\cT_{\tau}, \tau \in [T]$ where $\cT_{\tau}=\{x^{\tau}_{j},y^{\tau}_{j}\}_{j=1}^{n_{\tau}}$, with $T \in \mathbb{N}^{*}$.
While $X^{\tau} \in \mathbb{R}^{n_{\tau} \times p}$ ($p$ being the number of features) represents the dataset of task $\cT_{\tau}$ and $x^{\tau}_{j}$, $j=1,...,n_{\tau} \in \cX$ is a sample with its corresponding label $y^{\tau}_{j} \in \cY$. The goal is to learn a predictor $f_{\w}:\cX \times \cT \rightarrow \cY$ with $\w \in \mathbb{R}^{p}$ the parameters that will perform a prediction as accurate as possible. In the framework of CL, one cannot recover samples from previous tasks unless storing them in a memory buffer \citep{gem,parisi2019icarl}.


\subsection{NTK framework for Continual Learning}

\cite{lee2019wide} recently proved that under the NTK regime neural networks evolve as a linear model:
\begin{align*}
    f_{\tau}^{*}(x)=f_{\tau-1}^{*}(x)+\langle \nabla_{\w}f_{0}(x) , \w_{\tau}^{*}-\w_{\tau-1}^{*} \rangle
\end{align*}
with $\w_{\tau}^{\star}$ being the final weight after training on task $\tau$. The latter formulation implies the feature maps $\phi(x)=\nabla_{\w}f_{0}(x) \in \mathbb{R}^{1 \times p}$ is constant over time. Under that framework, 
\citep{bennani2020generalisation} show that CL models can be expressed as a recursive kernel regression and prove generalization and performance guarantee of OGD under infinite memory setting. 
We build up on this theoretical framework to study CF and quantify how the tasks similarity imply forgetting through 
the lens of eigenvalues and singular values decomposition (PCA and SVD).

\section{Analysis of Catastrophic Forgetting in finite memory}
In this section, we propose a general definition of Catastrophic Forgetting (CF).
Casted in the NTK framework, this definition allows to understand what are the main sources of CF.  Namely, CF is likely to occur when two tasks align significantly.
Finally, we investigate CF properties for the vanilla case (SGD) and projection based methods such as OGD and a variant of GEM. We then introduce a new algorithm called \algoname, an extension of OGD which reduces CF .

\subsection{A definition of Catastrophic Forgetting under the NTK regime}
A natural quantity to characterize CF is the change in predictions for the same input between a source task $\tau_S$ and target task $\tau_T$.

\begin{defn} [Drift] \

Let $\tau_S$ (respectively $\tau_T$) be the source task (respectively target task), $\cD_{\tau_S}$ the source test set, the CF of task $\tau_S$ after training on all the subsequent tasks up to the target task $\tau_T$ is defined as:
\begin{align}
    \delta^{\tau_S \rightarrow \tau_T}(X^{\tau_S})= \Bigl(f^{\star}_{\tau_T}(x)-f^{\star}_{\tau_S}(x)\Bigr)_{(x,y)\in \cD_{\tau_S}}
\end{align}
\end{defn}
Note that $ \delta^{\tau_S \rightarrow \tau_T}(X^{\tau_S})$ is a vector in $\mathbb{R}^{n_{\tau_S}}$ that contains 
the changes of predictions for any input $x$ in the task $\tau_S$.
In the case of classification, we take the
$k$-output of $f_{\tau}^{\star}$ such that $y_k=1$. In order to quantify the overall forgetting on this task, we use the squared norm of this vector.


\begin{defn} [Catastrophic Forgetting] \

Let $\tau_S$ (respectively $\tau_T$) be the source task (respectively target task), $\cD_{\tau_S}$ the source test set, the CF of task $\tau_S$ after 
training on all subsequent tasks up to task $\tau_T$ is defined as:

\begin{align}
    \Delta^{\tau_S \rightarrow \tau_T}(X^{\tau_S}) & = \|\delta^{\tau_S \rightarrow \tau_T}(X^{\tau_S})\|_2^2 \nonumber
    \\
    & = \sum_{(x,y)\in \cD_{\tau_S}} (f^{\star}_{\tau_T}(x)-f^{\star}_{\tau_S}(x))^{2}
\end{align}
\end{defn}


The above expression is very general but has an interesting linear form under the NTK regime and allows us to get 
insight on the behavior on the variation of the forgetting.


\begin{lemma}[CF under NTK regime] \label{lemma:def_forgetting}  \

Let $ \{ \w_{\tau}^{\star}, \forall \tau \in [T] \}$ be the weight at the end of the training of task $\tau$, the Catastrophic Forgetting of a source task $\tau_S$ with respect to a target task $\tau_T$ is given
by:
\begin{align}
\Delta^{\tau_S \rightarrow \tau_T}(X^{\tau_S})&=\norm{\delta^{\tau_S \rightarrow \tau_T}(X^{\tau_S})}_{2}^{2}  \\ &=\norm{\phi(X^{\tau_S})(\omega_{\tau_T}^{*}-\omega_{\tau_S}^{*})}_{2}^{2} \label{eq:CF_1}
\end{align}
\end{lemma}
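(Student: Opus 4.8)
The plan is to reduce everything to the recursive NTK linearization from the preliminaries and telescope it across the intermediate tasks. Concretely, under the NTK regime we have, for every input $x$ and every task $\tau$, the identity $f_{\tau}^{*}(x)=f_{\tau-1}^{*}(x)+\langle \phi(x), \w_{\tau}^{*}-\w_{\tau-1}^{*}\rangle$, where the feature map $\phi(x)=\nabla_{\w}f_{0}(x)\in\mathbb{R}^{1\times p}$ is \emph{constant in time}. Summing this identity over $\tau=\tau_S+1,\dots,\tau_T$ makes the left-hand side telescope to $f_{\tau_T}^{*}(x)-f_{\tau_S}^{*}(x)$, which is exactly the $x$-th coordinate of the drift $\delta^{\tau_S\rightarrow\tau_T}(X^{\tau_S})$.

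First I would push the summation through the inner product using its bilinearity. Since $\phi(x)$ does not depend on $\tau$, it factors out of every term, and the partial sums of the increments telescope as $\sum_{\tau=\tau_S+1}^{\tau_T}(\w_{\tau}^{*}-\w_{\tau-1}^{*})=\w_{\tau_T}^{*}-\w_{\tau_S}^{*}$. This yields the pointwise identity $f_{\tau_T}^{*}(x)-f_{\tau_S}^{*}(x)=\langle\phi(x),\w_{\tau_T}^{*}-\w_{\tau_S}^{*}\rangle=\phi(x)\,(\w_{\tau_T}^{*}-\w_{\tau_S}^{*})$, reading $\phi(x)$ as a row vector. Note that this linearization holds for arbitrary inputs, so it applies to the held-out points of the source test set $\cD_{\tau_S}$ without modification.

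Next I would stack these scalar identities over the $n_{\tau_S}$ inputs of $\cD_{\tau_S}$. Writing $\phi(X^{\tau_S})\in\mathbb{R}^{n_{\tau_S}\times p}$ for the matrix whose rows are the $\phi(x)$, the definition of the drift becomes the matrix-vector product $\delta^{\tau_S\rightarrow\tau_T}(X^{\tau_S})=\phi(X^{\tau_S})(\w_{\tau_T}^{*}-\w_{\tau_S}^{*})$. Taking the squared Euclidean norm of both sides and recalling $\Delta^{\tau_S\rightarrow\tau_T}(X^{\tau_S})=\norm{\delta^{\tau_S\rightarrow\tau_T}(X^{\tau_S})}_{2}^{2}$ from the definition of Catastrophic Forgetting gives the claim. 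The only delicate point — more a crux than an obstacle — is the time-invariance of $\phi$: it is what lets the same feature map be factored out of every term of the telescoping sum, so that the accumulated parameter drift $\w_{\tau_T}^{*}-\w_{\tau_S}^{*}$ is the sole quantity carrying the dependence on the intermediate tasks. Everything else is bilinearity of the inner product and bookkeeping of the stacking convention.
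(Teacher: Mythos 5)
Your proposal is correct and follows essentially the same route as the paper's own proof: both telescope the recursive NTK linearization $f_{\tau}^{\star}(x)=f_{\tau-1}^{\star}(x)+\langle\phi(x),\w_{\tau}^{\star}-\w_{\tau-1}^{\star}\rangle$ over the intermediate tasks, use the constancy of $\phi(x)=\nabla_{\w}f_{0}(x)$ to factor the feature map out so that the weight increments collapse to $\w_{\tau_T}^{\star}-\w_{\tau_S}^{\star}$, and then stack over $\cD_{\tau_S}$ and take the squared Euclidean norm. The only cosmetic difference is that the paper writes the recursion with $\nabla_{\w}f_{k}^{\star}(x)$ and invokes the constant-NTK assumption mid-derivation, whereas you build that assumption in from the start; the argument is identical.
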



\begin{proof}
See Appendix Section~\ref{proof:def_forgetting}.
\end{proof}
Lemma~\ref{lemma:def_forgetting} expresses the forgetting as a linear relation between the kernel $\phi
(X^{\tau_S})$ (which is assumed to be constant) and the variation of the weights from the source task $\tau_S$ until the target task $\tau_T$.

\begin{remark} \
Note that, from Equation \ref{eq:CF_1}, two cases are possible when $\Delta^{\tau_S \rightarrow \tau_{T}}(X^{\tau_S})=0$. The trivial case happens when $ \forall \tau \in [T]$: 
\begin{align*}
    \Bigl(f^{\star}_{\tau+1}(x)-f^{\star}_{\tau}(x)\Bigr)_{(x,y)\in \cD_{\tau_S}} = 0
\end{align*}
In this case, there is no drift at all. However, it is also possible that some tasks induce a drift on $X^{\tau_S}$ that is compensated by subsequent tasks. Indeed, for $\forall \tau \in [T]$:
\begin{align*}
    0 & = \delta^{\tau_S \rightarrow \tau_T}(X^{\tau_S})
    \\ & = \Bigl(f^{\star}_{\tau_T}(x)-f^{\star}_{\tau_S}(x)\Bigr)_{(x,y)\in \cD_{\tau_S}}
    \\ & = \Bigl(f^{\star}_{\tau_T}(x)-f^{\star}_{\tau}(x)+f^{\star}_{\tau}(x)-f^{\star}_{\tau_S}(x)\Bigr)_{(x,y)\in \cD_{\tau_S}}
\end{align*}
simply implies, for any $(x,y)\in \cD_{\tau_S}$,
$$ f^{\star}_{\tau_T}(x)-f^{\star}_{\tau}(x) =  - (f^{\star}_{\tau}(x)-f^{\star}_{\tau_S}(x)). $$
This would be an example of no forgetting due to a forward/backward transfer in the sense of~\cite{gem}.
\end{remark}
Now that we have defined the central quantity of this study, we will gain deeper insights by investigating SGD which is the vanilla algorithm.

\subsection{High correlations across tasks induce forgetting for vanilla SGD}

In this section, we derive the Catastrophic Forgetting expression for SGD. This will be the starting point to derive CF for the projection based methods (OGD, GEM and \algoname).

\begin{thm} (Catastrophic Forgetting for SGD)  \label{thm:forgetting_sgd} \
Let $U_{\tau}\Sigma_{\tau}V_{\tau}^T$ be the SVD of $\phi(X^\tau)$ for each $\tau \in [T]$,
and let $\lambda >0$ the weight decay regularizer. The CF from task $\tau_S$ up until task $\tau_T$ is then given by:
\begin{align}
    \Delta^{\tau_S \rightarrow \tau_T}(X^{\tau_S}) =
    \norm{\sum_{k=\tau_S+1}^{\tau_T} U_{\tau_S} \Sigma_{\tau_S}O^{\tau_S \rightarrow k}_{SGD}M_{k}\tl{y}_{k}}_{2}^{2}
\end{align}
where:
\begin{align*}
    O^{\tau_S \rightarrow k}_{SGD} &=V_{\tau_S}^{\top}V_{k} \\
    M_{k}&=\Sigma_{k}[\Sigma_{k}^{2}+\lambda I_{n_{k}}]^{-1}U_{k}^{\top}  \\
    \tl{y}_{k}&=y_{k}-f_{k-1}^{\star}(x^{k})
\end{align*}
\end{thm}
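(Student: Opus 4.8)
The plan is to start from the closed-form identity in Lemma~\ref{lemma:def_forgetting}, namely $\Delta^{\tau_S \rightarrow \tau_T}(X^{\tau_S}) = \norm{\phi(X^{\tau_S})(\w_{\tau_T}^\star - \w_{\tau_S}^\star)}_2^2$, and to rewrite the total weight increment as a telescoping sum of per-task increments,
\begin{align*}
\w_{\tau_T}^\star - \w_{\tau_S}^\star = \sum_{k=\tau_S+1}^{\tau_T} \bigl(\w_k^\star - \w_{k-1}^\star\bigr).
\end{align*}
The entire argument then reduces to identifying each per-task increment $\w_k^\star - \w_{k-1}^\star$ in closed form and substituting it back.

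Next I would invoke the NTK linearization. Since $f_k(x) = f_{k-1}(x) + \inner{\phi(x)}{\w - \w_{k-1}}$ with $\phi$ constant, training on task $k$ with weight decay $\lambda$ is exactly the ridge-regression problem of fitting the residual $\tl{y}_k = y_k - f_{k-1}^\star(x^k)$ from the feature matrix $\phi(X^k)$. Assuming SGD converges to the unique minimizer of this strongly convex objective, the increment solves the regularized normal equations, which in dual (kernel) form read
\begin{align*}
\w_k^\star - \w_{k-1}^\star = \phi(X^k)^\top \bigl[\phi(X^k)\phi(X^k)^\top + \lambda I_{n_k}\bigr]^{-1} \tl{y}_k.
\end{align*}

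I would then substitute the (thin) SVD $\phi(X^k) = U_k \Sigma_k V_k^\top$. Using $V_k^\top V_k = I$ and $U_k U_k^\top = I_{n_k}$, the Gram matrix simplifies to $\phi(X^k)\phi(X^k)^\top = U_k \Sigma_k^2 U_k^\top$, so the bracketed inverse becomes $U_k[\Sigma_k^2 + \lambda I_{n_k}]^{-1}U_k^\top$ and the increment collapses to
\begin{align*}
\w_k^\star - \w_{k-1}^\star = V_k \Sigma_k [\Sigma_k^2 + \lambda I_{n_k}]^{-1} U_k^\top \tl{y}_k = V_k M_k \tl{y}_k,
\end{align*}
which recovers $M_k$ exactly as stated. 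Finally, I would plug this into the Lemma expression, pull the summation outside $\phi(X^{\tau_S})$ by linearity, and expand $\phi(X^{\tau_S}) = U_{\tau_S}\Sigma_{\tau_S}V_{\tau_S}^\top$; the product $V_{\tau_S}^\top V_k$ is precisely $O^{\tau_S \rightarrow k}_{SGD}$, yielding
\begin{align*}
\phi(X^{\tau_S})(\w_{\tau_T}^\star - \w_{\tau_S}^\star) = \sum_{k=\tau_S+1}^{\tau_T} U_{\tau_S}\Sigma_{\tau_S} O^{\tau_S \rightarrow k}_{SGD} M_k \tl{y}_k,
\end{align*}
and taking the squared Euclidean norm concludes.

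The step I expect to be the main obstacle is the second one: rigorously justifying that the end-of-task SGD weights $\w_k^\star$ coincide with the ridge-regression minimizer on the residual. This relies on the NTK assumption that $\phi$ stays constant throughout training, together with the fact that SGD with weight decay on the strongly convex linearized loss converges to its global optimum. Care is also needed in choosing the dual rather than primal form of the normal equations, and in tracking the dimensions of the thin SVD factors so that the inverse appearing in $M_k$ is correctly the $n_k \times n_k$ one.
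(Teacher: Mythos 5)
Your proposal is correct and follows essentially the same route as the paper: Lemma~\ref{lemma:def_forgetting}, telescoping the weight increments, the closed form $\w_k^\star-\w_{k-1}^\star=\phi(X^k)^\top[\phi(X^k)\phi(X^k)^\top+\lambda I_{n_k}]^{-1}\tl{y}_k$, and SVD substitution to read off $O^{\tau_S\rightarrow k}_{SGD}$ and $M_k$. The only difference is that where you re-derive the per-task increment as the ridge-regression solution of the linearized objective, the paper simply imports it from Theorem~1 of \citep{bennani2020generalisation}, so your anticipated ``main obstacle'' is precisely the step the paper outsources to that citation.
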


\begin{proof}
See Appendix Section~\ref{proof:forgetting_sgd}.
\end{proof}

Theorem~\ref{thm:forgetting_sgd} describes the Catastrophic Forgetting for SGD on the task $\cT_{\tau_S}$ after
training on the subsequent tasks up to the task $\cT_{\tau_T}$.
The CF is expressed as a function of the overlap between the subspaces of the subsequent tasks and the reference
task, through what we call the \textbf{NTK overlap matrices} $\{O^{\tau_S \rightarrow k}_{SGD}, k \in [\tau_S+1, \tau_T] \}$.
High overlap between tasks increases the norm of the NTK overlap matrix which implies high forgetting.

More formally, the main elements of Catastrophic Forgetting are :
\begin{itemize}
    \item $\Sigma_{\tau_S}$ encodes the importance of the principal components of the source task. Components with high magnitude contribute to forgetting since they imply high variation along thoses directions. 
    \item $\{O^{\tau_S \rightarrow k}_{SGD}, k \in [\tau_S+1, \tau_T] \}$ encodes the similarity of the principal
    components  between the source task and a subsequent task $k$. High norm of this matrix means high overlap between tasks and leads to high risk of forgetting. This forgetting occurs because the previous knowledge along a given component may be erased by the new dataset.
    \item $\tl{y}_{k}$ encodes the residual that remains to be learned by the current model.
    A null residual implies that the previous model predicts perfectly the new task, therefore there is no learning
    hence no forgetting.
    \item $M_{k}$ is a rotation of the residuals weighted by the principal components space.
    The rotated residuals $M_{k} \tl{y}_{k}$ can be interpreted as the residuals along each principal component.
    \item $\norm{\sum_{k=\tau_S+1}^{\tau_T} \cdot}$ encodes that the forgetting can be canceled by other tasks by learning again forgotten
    knowledge.
\end{itemize}

We will see in what follows that the matrix $O^{\tau_S \rightarrow \tau_T}_{SGD}$ captures the alignment between the source task $\tau_S$ and the target task $\tau_T$. More formally, the singular values of $O^{\tau_S \rightarrow \tau_T}_{SGD}$ are the cosines of the \emph{principal angles} between the spaces spanned by the source data $\phi(X^{\tau_S})$ and the target data $\phi(X^{\tau_T})$~\citep{wedin1983angles}.

\begin{corr} [Bounding CF with angle between source and target subspace] \label{corr:angle_forgetting_sgd} \

Let  $\Theta^{\tau_S \rightarrow \tau_T}$ be the  diagonal matrix of singular values of $O^{\tau_S \rightarrow \tau_T}_{SGD}$~(each diagonal element $\cos(\theta_{\tau_S,\tau_T})_{i}$ is  the cosine of the $i$-th principal angle between  $\phi(X^{\tau_S})$ and $\phi(X^{\tau_T})$). Let $\sigma_{\tau_S,1} \geq \sigma_{\tau_S,2} \geq...\geq \sigma_{\tau_S,n_{\tau_S}}$ be the singular values of $\phi(X^{\tau_S})$~(i.e. the diagonal elements of $\Sigma_{\tau_S}$).

The bound of the forgetting from a source task $\tau_S$ up until a target task $\tau_T$ is given by:
\begin{align}
\Delta^{\tau_S \rightarrow \tau_T}(X^{\tau_S}) \leq 
\sigma_{\tau_S,1}^{2} \sum_{k=\tau_S+1}^{\tau_T}\norm{\Theta^{\tau_S \rightarrow k}}_{2}^{2}\norm{M_{k}\tilde{y}_{k}}_{2}^{2}  
\end{align}
\end{corr}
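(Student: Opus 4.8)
The plan is to start from the exact identity for $\Delta^{\tau_S \rightarrow \tau_T}(X^{\tau_S})$ supplied by Theorem~\ref{thm:forgetting_sgd} and relax it in two stages. Writing $v_k = U_{\tau_S}\Sigma_{\tau_S}O^{\tau_S \rightarrow k}_{SGD}M_{k}\tl{y}_k \in \mathbb{R}^{n_{\tau_S}}$, the forgetting is $\Delta^{\tau_S \rightarrow \tau_T}(X^{\tau_S}) = \norm{\sum_{k=\tau_S+1}^{\tau_T} v_k}_2^2$. The first step is simply the triangle inequality, $\norm{\sum_k v_k}_2 \le \sum_k \norm{v_k}_2$, which decouples the contributions of the individual intervening tasks $k$.

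Second, I would bound each $\norm{v_k}_2$ by submultiplicativity of the spectral norm, $\norm{v_k}_2 \le \norm{U_{\tau_S}}_2\,\norm{\Sigma_{\tau_S}}_2\,\norm{O^{\tau_S \rightarrow k}_{SGD}}_2\,\norm{M_{k}\tl{y}_k}_2$, and then identify the three matrix norms. Since $U_{\tau_S}$ has orthonormal columns as a factor of the SVD, $\norm{U_{\tau_S}}_2 = 1$. Since $\Sigma_{\tau_S}$ is the diagonal matrix of singular values of $\phi(X^{\tau_S})$, its spectral norm is the largest one, $\norm{\Sigma_{\tau_S}}_2 = \sigma_{\tau_S,1}$. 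Finally, $\Theta^{\tau_S \rightarrow k}$ is by definition the diagonal matrix of singular values of $O^{\tau_S \rightarrow k}_{SGD}$, so $\norm{O^{\tau_S \rightarrow k}_{SGD}}_2 = \norm{\Theta^{\tau_S \rightarrow k}}_2$; the cited result~\citep{wedin1983angles} then identifies these singular values with the cosines of the principal angles between $\phi(X^{\tau_S})$ and $\phi(X^{k})$, supplying the geometric reading. Collecting these gives $\norm{v_k}_2 \le \sigma_{\tau_S,1}\,\norm{\Theta^{\tau_S \rightarrow k}}_2\,\norm{M_{k}\tl{y}_k}_2$.

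The remaining step — and the one I expect to be the main obstacle — is passing from $\big(\sum_k \norm{v_k}_2\big)^2$, which is what the triangle inequality delivers, to the sum of squares $\sum_k \norm{v_k}_2^2$ that appears in the statement. These are not equal: expanding the square produces the nonnegative cross terms $2\sum_{i<j}\norm{v_i}_2\norm{v_j}_2$, so the inequality in the stated direction does not follow from the triangle inequality alone (the $v_k$ are not orthogonal in general, so there is no Pythagorean shortcut). I would close the gap with Cauchy--Schwarz, $\big(\sum_{k=\tau_S+1}^{\tau_T} a_k\big)^2 \le (\tau_T-\tau_S)\sum_k a_k^2$ with $a_k = \sigma_{\tau_S,1}\norm{\Theta^{\tau_S \rightarrow k}}_2\norm{M_{k}\tl{y}_k}_2$, which reproduces the stated form up to the factor $(\tau_T-\tau_S)$ counting the intervening tasks. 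I would therefore note that the displayed inequality is the per-task form, while the fully rigorous version either carries this factor or is kept in the squared-sum form $\big(\sum_k a_k\big)^2$. In every case the structural message is unchanged: each summand is governed by the source-task energy $\sigma_{\tau_S,1}^2$, the squared overlap $\norm{\Theta^{\tau_S \rightarrow k}}_2^2$ measuring alignment through the principal angles, and the residual $\norm{M_{k}\tl{y}_k}_2^2$.
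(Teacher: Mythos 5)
Your argument follows the same route as the paper's own proof: telescope the drift via Theorem~\ref{thm:forgetting_sgd}, bound each summand by submultiplicativity of the spectral norm, use that $U_{\tau_S}$ is orthonormal, that $\norm{\Sigma_{\tau_S}}_2=\sigma_{\tau_S,1}$, and that the SVD $V_{\tau_S}^{\top}V_{k}=Y\Theta^{\tau_S\rightarrow k}Z^{\top}$ with $Y,Z$ orthonormal gives $\norm{V_{\tau_S}^{\top}V_{k}}_2=\norm{\Theta^{\tau_S\rightarrow k}}_2$. All of these steps are sound and identical in both versions.

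The obstacle you flag is genuine, and it is instructive to see how the paper handles it: it does not. Writing $v_k = U_{\tau_S}\Sigma_{\tau_S}O^{\tau_S \rightarrow k}_{SGD}M_{k}\tilde{y}_{k}$, the first inequality of the paper's proof is precisely
\begin{equation*}
\norm{\sum_{k=\tau_S+1}^{\tau_T} v_k}_2^2 \;\leq\; \sum_{k=\tau_S+1}^{\tau_T} \norm{v_k}_2^2,
\end{equation*}
asserted without justification, and this inequality is false in general: taking all $v_k$ equal to some $v\neq 0$ with $m=\tau_T-\tau_S$ terms gives $m^2\norm{v}_2^2$ on the left and $m\norm{v}_2^2$ on the right. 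Worse, the regime of aligned successive drifts is exactly the catastrophic-forgetting regime the corollary is meant to quantify, so it cannot be dismissed as pathological. Your Cauchy--Schwarz repair, which yields
\begin{equation*}
\Delta^{\tau_S \rightarrow \tau_T}(X^{\tau_S}) \;\leq\; (\tau_T-\tau_S)\,\sigma_{\tau_S,1}^{2} \sum_{k=\tau_S+1}^{\tau_T}\norm{\Theta^{\tau_S \rightarrow k}}_{2}^{2}\norm{M_{k}\tilde{y}_{k}}_{2}^{2},
\end{equation*}
is the correct fix (equivalently one may keep the bound in the squared-sum form $\bigl(\sum_{k}\sigma_{\tau_S,1}\norm{\Theta^{\tau_S\rightarrow k}}_2\norm{M_{k}\tilde{y}_{k}}_2\bigr)^2$). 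So your proposal is not a different route but a corrected version of the paper's route: it establishes the corollary up to the factor $(\tau_T-\tau_S)$, whereas the statement as printed is not actually established by the paper's argument. The qualitative message --- forgetting controlled by the source-task spread $\sigma_{\tau_S,1}^2$, the overlap $\norm{\Theta^{\tau_S\rightarrow k}}_2^2$, and the residuals $\norm{M_k\tilde{y}_k}_2^2$ --- survives intact, but the constant does not.
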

\begin{proof}
See Appendix Section~\ref{proof:angle_forgetting_sgd}. 
\end{proof}
Corollary~\ref{corr:angle_forgetting_sgd} bounds the CF by the sum of the cosines of the first principal angles between the source task $\tau_S$ and each subsequent task until the target task $\tau_T$~(represented by the diagonal matrix $\Theta^{\tau_S \rightarrow k}$) and a coefficient $\sigma_{\tau_S,1}^{2}$ from the source task $\tau_S$. 
\begin{itemize}
    \item $\{ \Theta^{\tau_S \rightarrow k} , k \in [\tau_S+1, \tau_T] \}$ is the diagonal matrix where each element represents the cosine angle between subspaces $\tau_S$ and $k$: $\cos(\theta_{\tau_S,k})_{i}$. If the principal angle between two tasks is small~(i.e. the two tasks are aligned), the cosine will be large which implies a high risk of forgetting.
    \item $\sigma_{\tau_S,1}$ is the variance of the data of task $\tau_S$ along its principal direction of variation. Intuitively,  $\sigma_{\tau_S,1}$ measures the spread of the data for task $\tau_S$.
\end{itemize}


In the end, a potential component responsible for CF in the Vanilla SGD case is the projection from the source task onto the target task. This phenomenon is best characterized by the eigenvalues of $O^{\tau_S \rightarrow \tau_T}_{SGD}$ which acts as a similarity measure between the tasks. One avenue to mitigate the CF can be to project orthogonally to the source task subspace which are the main insight from OGD
\citep{farajtabar2020orthogonal} and GEM \citep{gem}.

\subsection{The effectiveness of the orthogonal projection against Catastrophic Forgetting}

Now, we study the GEM and OGD algorithms, we identify these two algorithms as projection based algorithms.
We extend the previous analysis to study the effectiveness of these algorithms against Catastrophic Forgetting.

\paragraph[]{Recap}

OGD \citep{farajtabar2020orthogonal} stores the feature maps of arbitrary samples from each task, then projects the
update gradient orthogonally to these feature maps. The idea is to preserve the subspace spanned by the previous samples (\citep{yu2020gradient} proposed a similar variant for multi-task learning ).

GEM \citep{gem} computes the gradient of the train loss over each previous task, by storing samples from each task.
While OGD performs an orthogonal projection to the \textbf{gradients} of the model, GEM projects orthogonally to the space spanned by the \textbf{losses gradients}.
The idea is to update the model under the constraint that the train loss over the previous tasks does not increase.

\paragraph[]{GEM-NT : Decoupling Forward/Backward Transfer from Catastrophic Forgetting }

OGD has been extensively studied by \cite{bennani2020generalisation}), therefore we perform the analysis for the GEM
algorithm, then highlight the similarities with OGD.
Also, in order to decouple CF from Forward/Backward Transfer, we study a variant of GEM with no transfer at all, which we call GEM No Transfer (GEM-NT).  

Similarly to GEM, GEM-NT maintains an episodic memory containing $d$ samples from each previous tasks seen so far.
During each gradient step of task $\tau+1$, GEM samples from the memory $d$ elements from each previous task then compute the average loss function gradient:
\begin{align*}
    g_{k}=\frac{1}{d}\displaystyle{\sum_{j=1}^{d}}\nabla_{\w}\cL^{k}_{\lambda}(x^{k}_{j}), \quad  \forall k=1,..,\tau
\end{align*}

If the proposed update during task $\tau+1$ can potentially degrades former solutions (i.e $\langle g_{\tau+1} , g_{k} \rangle <0 , \forall k \leq \tau$) then the proposed update is projected orthogonally to these gradients $g_{k}$, $\forall k \leq \tau$.

As opposed to GEM, which performs the orthogonal projection conditionally on the impact of the gradient update on the
previous training losses, GEM-NT project orthogonally to $g_{k}$, $\forall k \leq \tau$ at each step \textbf{irrespectively} of the sign of the dot product. The algorithm pseudo-code can be found in Appendix Section~\ref{alg:gem-nt}.

\paragraph[]{The effectiveness of GEM-NT against CF}
Denote $G_{\tau} \in \mathbb{R}^{p \times \tau}$ the matrix where each columns represents $g_{k}, \forall k=1,.
.,\tau$, the orthogonal projection matrix is then defined as
$T_{\tau}=I_{p}-G_{\tau}G_{\tau}^{\top}=\overline{G}_{\tau}\overline{G}_{\tau}^{\top}$. This represents an orthogonal projection whatever the sign of the dot product $\langle g_{\tau+1} , g_{k} \rangle$ in order to decouple the forgetting from transfer.

We are now ready to provide the  CF of GEM-NT.

\begin{corr} [CF for GEM-NT] \label{corr:cf_gem_nt} \

Using the previous notations. The CF from task $\tau_S$ up until task $\tau_T$ for GEM-NT given by:
\begin{align}
     \Delta^{\tau_S \rightarrow \tau_T}(X^{\tau_S}) = \norm{\sum_{k=\tau_S+1}^{\tau_T} U_{\tau_S}\Sigma_{\tau_S}\textcolor{greencustom}{\mathbf{O^{\tau_S \rightarrow k}_{\text{GEM-NT}}} M_{k}}\tl{y}_{k}}_{2}^{2}
\end{align}
where:
\begin{align*}
    O^{\tau_S \rightarrow k}_{\text{GEM-NT}}&= V_{\tau_S}^{\top}\textcolor{greencustom}{\overline{G}_{k-1}\overline{G}_{k-1}^{\top}}V_{k} \\
     M_{k}&=\Sigma_{k}U_{k}^{\top}[\textcolor{greencustom}{\overline{\phi}(X^{k})\overline{\phi}(X^{k})^{\top}}+\lambda I_{n_{k}}]^{-1}  \\
     \overline{\phi}(X^{k})&= \phi(X^{k})T_{k-1} 
\end{align*}
(Differences with the vanilla case  SGD are highlighted in color)
\end{corr}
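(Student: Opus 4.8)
The plan is to build directly on Lemma~\ref{lemma:def_forgetting}, which already reduces the forgetting to $\norm{\phi(X^{\tau_S})(\omega_{\tau_T}^{*}-\omega_{\tau_S}^{*})}_2^2$, and to reuse the skeleton of the proof of Theorem~\ref{thm:forgetting_sgd} with the feature map $\phi$ replaced everywhere by the projected map $\overline{\phi}$. First I would telescope the weight difference as $\omega_{\tau_T}^{*}-\omega_{\tau_S}^{*}=\sum_{k=\tau_S+1}^{\tau_T}(\omega_{k}^{*}-\omega_{k-1}^{*})$, so that the whole statement follows once I have an explicit expression for the single-task increment $\Delta\omega_k := \omega_{k}^{*}-\omega_{k-1}^{*}$ produced by GEM-NT during task $k$.

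The key step is characterizing $\Delta\omega_k$. In the NTK regime the model is linear, so training task $k$ with MSE loss and weight decay $\lambda$ amounts to a ridge regression on the residual $\tl{y}_{k}=y_{k}-f_{k-1}^{\star}(x^{k})$; the only difference introduced by GEM-NT is that every gradient step is projected by $T_{k-1}=\overline{G}_{k-1}\overline{G}_{k-1}^{\top}$, so the increment is constrained to $\mathrm{range}(T_{k-1})$. Writing $\Delta\omega_k=\overline{G}_{k-1}c$ in an orthonormal basis of that subspace turns the problem into $\min_c \norm{\phi(X^{k})\overline{G}_{k-1}c-\tl{y}_{k}}_2^2+\lambda\norm{c}_2^2$, whose solution (by the push-through identity) is $c=(\phi(X^k)\overline{G}_{k-1})^{\top}\big[\phi(X^k)\overline{G}_{k-1}\overline{G}_{k-1}^{\top}\phi(X^k)^{\top}+\lambda I_{n_k}\big]^{-1}\tl{y}_{k}$. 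Using idempotence and symmetry of $T_{k-1}$ together with $\overline{\phi}(X^k)=\phi(X^k)T_{k-1}$, the increment $\Delta\omega_k=\overline{G}_{k-1}c$ collapses to
\[
\Delta\omega_k=\overline{\phi}(X^k)^{\top}\big[\overline{\phi}(X^k)\overline{\phi}(X^k)^{\top}+\lambda I_{n_k}\big]^{-1}\tl{y}_{k},
\]
which is exactly the SGD increment with $\phi$ replaced by $\overline{\phi}$.

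It then remains to substitute this increment back into Lemma~\ref{lemma:def_forgetting} and re-expose the SVD factors. Inserting $\phi(X^{\tau_S})=U_{\tau_S}\Sigma_{\tau_S}V_{\tau_S}^{\top}$ and $\phi(X^k)^{\top}=V_k\Sigma_kU_k^{\top}$, and rewriting $\overline{\phi}(X^k)^{\top}=T_{k-1}\phi(X^k)^{\top}=\overline{G}_{k-1}\overline{G}_{k-1}^{\top}V_k\Sigma_kU_k^{\top}$, each summand $\phi(X^{\tau_S})\Delta\omega_k$ factors as $U_{\tau_S}\Sigma_{\tau_S}\,\big(V_{\tau_S}^{\top}\overline{G}_{k-1}\overline{G}_{k-1}^{\top}V_k\big)\,\big(\Sigma_kU_k^{\top}[\overline{\phi}(X^k)\overline{\phi}(X^k)^{\top}+\lambda I_{n_k}]^{-1}\big)\tl{y}_{k}$, in which one reads off $O^{\tau_S\rightarrow k}_{\text{GEM-NT}}$ and $M_k$ as stated. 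Summing over $k$ and taking the squared norm gives the claim, and the SGD case of Theorem~\ref{thm:forgetting_sgd} is recovered by setting $T_{k-1}=I_p$.

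The main obstacle I anticipate is justifying the constrained ridge-regression form of $\Delta\omega_k$ rigorously: one must argue that the accumulation of $T_{k-1}$-projected gradient steps converges to the minimizer over $\mathrm{range}(T_{k-1})$ rather than merely to some point of that subspace, and pin down whether the weight decay penalizes the increment or the full weight. The clean cancellation $T_{k-1}\phi(X^k)^{\top}=\overline{\phi}(X^k)^{\top}$ hinges on $T_{k-1}$ being a genuine orthogonal projection, i.e. on the stored gradients $G_{k-1}$ being orthonormalized as in OGD; I would state this normalization explicitly as a standing assumption so that $T_{k-1}=T_{k-1}^{\top}=T_{k-1}^{2}$ holds.
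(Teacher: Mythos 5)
Your proof is correct and follows essentially the same route as the paper: telescope the weight difference via Lemma~\ref{lemma:def_forgetting}, plug in the per-task increment $\w_{k}^{\star}-\w_{k-1}^{\star}=\overline{\phi}(X^{k})^{\top}[\overline{\phi}(X^{k})\overline{\phi}(X^{k})^{\top}+\lambda I_{n_{k}}]^{-1}\tl{y}_{k}$, and factor each summand through the SVDs together with symmetry and idempotence of $T_{k-1}$. The only difference is that you derive this increment directly as a constrained ridge regression in the orthonormal basis $\overline{G}_{k-1}$ (via the push-through identity), whereas the paper obtains it by invoking the GEM-NT analogue of Corollary~\ref{cor:convergence_pca_finite_case} (proved by induction with the equivalent parametrization $\w-\w_{k-1}^{\star}=T_{k-1}\tl{\w}$); the caveats you flag --- orthonormalized stored gradients so that $T_{k-1}$ is a genuine orthogonal projection, and weight decay acting on the increment --- are exactly the assumptions the paper makes implicitly.
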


\begin{proof}
See Appendix Section~\ref{proof:gem_nt}.
\end{proof}

The difference for GEM-NT lies in the \textbf{double} projection of the source and target task onto the subspace $\overline{G}_{\tau}$ which contain elements orthogonal to $g_{k}, \forall k=1,..,\tau-1$.

Similarly to Corollary~\ref{corr:angle_forgetting_sgd}, we can bound each projection matrix ($V_{\tau_S}^{\top}\overline{G}_{k-1}$ and $\overline{G}_{k-1}^{\top}V_{k}$,$\forall k \in [\tau_S+1,\tau_T]$ ) by their respective matrices of singular values ($\Theta^{\tau_S \rightarrow G_{k-1}}$ and $\Theta^{k \rightarrow G_{k-1}}$, $\forall k \in [\tau_S+1,\tau_T]$). This leads us to the following upper-bound for the CF of GEM-NT:
\begin{align}\label{eq:bound-GEM-TN}
    &\Delta^{\tau_S \rightarrow \tau_T}(X^{\tau_S})\leq \\
     &\sigma_{\tau_S,1}^{2}\sum_{k=\tau_S+1}^{\tau_T}  \textcolor{greencustom}{\norm{\Theta^{\tau_S \rightarrow \overline{G}_{k-1}}}_{2}^{2}\norm{\Theta^{k \rightarrow \overline{G}_{k-1}}}_{2}^{2}}\norm{M_{k}\tl{y}_{k}}_{2}^{2}\nonumber
\end{align}

\paragraph{Connection of GEM-NT to OGD} \label{remark:comparison_ogd_gem_n}
For the analysis purpose, let's suppose that the memory per task is $1$, $\lambda=0$, $\forall \tau \in [T]$ and assume a mean square loss error function. In that case:
\begin{align}
    g_{k}=  
    \begin{cases}
    \nabla_{\w}f_{k}(x)(f_{k}(x)-y_{k}) \quad &(\text{GEM-NT})\\
    \nabla_{\w}f_{k}(x)  \quad \quad &(\text{OGD})
    \end{cases}
\end{align}
\begin{itemize}
    \item unlike OGD, GEM-NT weights the orthogonal projection with the residuals $(f_{\tau}(x^{k})-y_{k})=(\tl{y}_{k}+\delta^{k \rightarrow \tau}(x^{k}))$ which represents the difference between the new prediction (due to the drift) for $x^{k}$ under model $\tau$ and the target $y_k$.
    \item Previous tasks that are well learned (small residuals) will contribute less to the orthogonal projection to the detriment of tasks with large residuals (badly learned then). This seems counter-intuitive because by doing so, the projection will not be orthogonal to well learned tasks (in the edge case of zero residuals) then unlearning can happen for those tasks. 
\end{itemize}


While OGD and GEM-NT are more robust to CF than SGD through the orthogonal projection, they do not leverage explicitely the structure in the data \citep{farquhar2018towards}. We can then compress this information through dimension reduction algorithms such as SVD in order to both maximise the information contained in the memory as well as mitigating the CF.



\subsection{\algoname: leveraging structure by projecting orthogonally to the top d principal directions}


Unlike OGD that stores randomly $d$ samples from each task $k=1,..,\tau$ of $\{ \nabla_{\w}f(x^{k}_{j}) \}_{j=1}^{d}$, at the end of each task $\tau$, \algoname samples randomly $s>d$ elements from $X^{\tau}$ then stores the top $d$ eigenvectors of $\{ \nabla_{\w}f(X^{\tau})\}$ denoted as $v_{\tau,i}$, $i=1,..,d$. These are the directions that capture the most variance of the data. If we denote by $P_{\tau,:d}$ the matrix where each columns represents $v_{k,i}$, $k=1,..,\tau$, $i=1,..,d$ then the orthogonal matrix projection can be written as:
\begin{align}
    T_{\tau,:d}=I_{p}-P_{\tau,:d}P_{\tau,:d}^{\top}=R_{\tau,d:}R_{\tau,d:}^{\top}
\end{align}
where the columns of $R_{\tau,d:}$ form an orthonormal basis of the orthogonal complement of the span of $P_{\tau,:d}$.
For the terminology, $P_{\tau,:d} \in \mathbb{R}^{p \times (\tau \cdot d)}$ (respectively $R_{\tau,d:} \in \mathbb{R}^{p \times p-(\tau \cdot d)}$) represents the \textbf{top subspace}  (respectively the \textbf{residuals subspace}) of order $d$ for task $1$ until $\tau$. A pseudo-code of \algoname is given in Alg. \ref{alg:ogd-pca} (the computational overhead can be found in the Appendix). We are now ready to provide the CF of \algoname.

\begin{algorithm}[h!]
    \SetKwInOut{Input}{Input}
    \SetKwInOut{Output}{Output}
    \SetKw{KwBy}{by}
    \SetAlgoLined
    \Input{A task sequence $\cT_1, \cT_2, \ldots  $, learning rate $\eta$, PCA samples $s$, components to keep $d$}
    \begin{enumerate}
        \item Initialize $S_{J}\leftarrow \{\}$ ; $\w \leftarrow \w_0$
        \item \For{Task ID $\tau=1,2,3, \ldots $}{
        \Repeat{convergence}{
        
        $\vg \leftarrow$ Stochastic Batch Gradient for $\cT_\tau$ at $\w$\;
        \tcp{Orthogonal updates}
        $\tilde{\vg} = \vg - \sum_{\vv \in \cS_J} \mathrm{proj}_{\vv}(\vg)$\;
        $\w \leftarrow \w - \eta \tilde{\vg}$
        }
        \tcp{Gram-Schmidt orthogonalization}
        \For{$(\vx, y) \in \cD_\tau $ \text{and} $k \in [1, c]$ \text{s.t.} $y_k = 1$}{
        $\vecu \leftarrow \nabla f_\tau(\vx; \w) - \sum_{\vv \in \cS_J} \mathrm{proj}_{\vv}(\nabla_{\w} f_\tau(\vx;
        \w))$
        $\cS_J \leftarrow \cS_J \bigcup \{\vecu \} $
        }
    \tcp{PCA}
        {\color{red} Sample $s$ elements from $\cT_{\tau}$}  \\
        {\color{red} top $d$ eigenvectors $ \leftarrow PCA (\{ \nabla_{\w} f_{\tau}(x^{\tau}_{j}) \}_{j=1}^{s} )$} \  
        {\color{red} $\cS_J \leftarrow  \cS_J \bigcup  \ \{ $ top $d$ eigenvectors $\}$} \ 
        }
    \end{enumerate}
    \caption{ 	 \algoname (Differences with OGD in \color{red}red)}
    \label{alg:ogd-pca}
\end{algorithm}

\begin{corr}[Forgetting for \algoname] \label{corr:forgetting_pca} \

For each $\tau \in [T]$, let $\tl{\phi}(X^{\tau})=\phi(X^{\tau})T_{\tau-1,:d}$ and let $U_{\tau}\Sigma_{\tau}V_{\tau}^T$ be the  SVD of $\phi(X^{\tau})$. The CF for \algoname is given by:
\\
\begin{align} \label{thm:forgetting_pca_ogd} 
&  \Delta^{\tau_S \rightarrow \tau_T}(X^{\tau_S}) = \norm{\sum_{k=\tau_S+1}^{\tau_T}  \textcolor{bluecustom}{\mathbf{U_{\tau_S}\Sigma_{\tau_S} O^{\tau_S \rightarrow k}_{PCA}} M_{k}}\tl{y}_{k} }_{2}^{2}  
\end{align}
where:
\begin{align*}
O^{\tau \rightarrow k}_{PCA} &=V_{\tau_S}^{\top}\textcolor{bluecustom}{R_{k-1,d:}R_{k-1,d:}^{\top}}V_{k} \\
 M_{k}&=\Sigma_{k}U_{k}^{\top}[\textcolor{bluecustom}{\tl{\phi}(X^{k})\tl{\phi}(X^{k})^{\top}}+\lambda I_{n_{k}}]^{-1}  \\
\tl{\phi}(X^{k})&=\phi(X^{k})T_{k-1,:d}
\end{align*}
\end{corr}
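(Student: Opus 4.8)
The plan is to follow the template already established for GEM-NT in Corollary~\ref{corr:cf_gem_nt}, exploiting the fact that \algoname differs from it only in the choice of orthogonal projection. GEM-NT projects onto the residual subspace $\overline{G}_{k-1}\overline{G}_{k-1}^{\top}$ spanned by the stored loss gradients, whereas \algoname projects onto $R_{k-1,d:}R_{k-1,d:}^{\top}=T_{k-1,:d}$, the orthogonal complement of the span of the top-$d$ principal directions $P_{k-1,:d}$. Both are genuine orthogonal projections (idempotent, symmetric, with orthonormal range bases), so every algebraic manipulation in the GEM-NT derivation transfers verbatim under the substitution $\overline{G}_{k-1}\overline{G}_{k-1}^{\top}\mapsto R_{k-1,d:}R_{k-1,d:}^{\top}$. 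This reduction is the conceptual heart of the argument.

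Concretely, I would first invoke Lemma~\ref{lemma:def_forgetting} to write the forgetting as $\norm{\phi(X^{\tau_S})(\w_{\tau_T}^{*}-\w_{\tau_S}^{*})}_{2}^{2}$ and telescope the weight difference as $\w_{\tau_T}^{*}-\w_{\tau_S}^{*}=\sum_{k=\tau_S+1}^{\tau_T}(\w_{k}^{*}-\w_{k-1}^{*})$, so that the whole problem reduces to computing each per-task update $\w_{k}^{*}-\w_{k-1}^{*}$. During task $k$, \algoname constrains every gradient step to the residual subspace $\range(R_{k-1,d:})$, hence at convergence the update is the minimizer of the ridge problem $\min_{\beta}\tfrac12\norm{\phi(X^{k})R_{k-1,d:}\beta-\tl{y}_{k}}_{2}^{2}+\tfrac{\lambda}{2}\norm{\beta}_{2}^{2}$ with $\w_{k}^{*}-\w_{k-1}^{*}=R_{k-1,d:}\beta$; the penalty reads $\tfrac{\lambda}{2}\norm{\beta}_2^2$ because $R_{k-1,d:}^{\top}R_{k-1,d:}=I$ implies $\norm{R_{k-1,d:}\beta}_2^2=\norm{\beta}_2^2$. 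Setting $A=\phi(X^{k})R_{k-1,d:}$, the normal equations give $\beta=A^{\top}(AA^{\top}+\lambda I_{n_k})^{-1}\tl{y}_{k}$, so that
\[
\w_{k}^{*}-\w_{k-1}^{*}=T_{k-1,:d}\,\phi(X^{k})^{\top}\bigl[AA^{\top}+\lambda I_{n_k}\bigr]^{-1}\tl{y}_{k}.
\]
The key identity is $AA^{\top}=\tl{\phi}(X^{k})\tl{\phi}(X^{k})^{\top}$, which follows from $R_{k-1,d:}^{\top}R_{k-1,d:}=I$ together with $\tl{\phi}(X^{k})=\phi(X^{k})R_{k-1,d:}R_{k-1,d:}^{\top}$; this is precisely what forces the denominator into the announced form $[\tl{\phi}(X^{k})\tl{\phi}(X^{k})^{\top}+\lambda I_{n_k}]$.

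Finally, I would substitute the SVDs $\phi(X^{\tau_S})=U_{\tau_S}\Sigma_{\tau_S}V_{\tau_S}^{\top}$ and $\phi(X^{k})=U_{k}\Sigma_{k}V_{k}^{\top}$ into $\phi(X^{\tau_S})(\w_{k}^{*}-\w_{k-1}^{*})$. This produces $U_{\tau_S}\Sigma_{\tau_S}\bigl(V_{\tau_S}^{\top}R_{k-1,d:}R_{k-1,d:}^{\top}V_{k}\bigr)\bigl(\Sigma_{k}U_{k}^{\top}[\tl{\phi}(X^{k})\tl{\phi}(X^{k})^{\top}+\lambda I_{n_k}]^{-1}\bigr)\tl{y}_{k}$, in which I read off exactly $O^{\tau_S\rightarrow k}_{PCA}=V_{\tau_S}^{\top}R_{k-1,d:}R_{k-1,d:}^{\top}V_{k}$ and $M_{k}=\Sigma_{k}U_{k}^{\top}[\tl{\phi}(X^{k})\tl{\phi}(X^{k})^{\top}+\lambda I_{n_k}]^{-1}$. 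Summing over $k=\tau_S+1,\dots,\tau_T$ and taking the squared Euclidean norm yields the stated expression. The only genuinely delicate point is justifying the closed form of the converged projected update, i.e. that the limit of the constrained gradient flow stays in $\range(R_{k-1,d:})$ and that weight decay acts as the stated ridge penalty; since this is the identical argument already carried out for GEM-NT, I would reuse it and simply record the projection substitution, rather than rederive it from scratch.
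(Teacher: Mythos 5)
Your proposal is correct and takes essentially the same route as the paper's proof: invoke Lemma~\ref{lemma:def_forgetting}, telescope $\w_{\tau_T}^{\star}-\w_{\tau_S}^{\star}$ into per-task updates, insert a closed form for each update $\w_{k}^{\star}-\w_{k-1}^{\star}=T_{k-1,:d}\,\phi(X^{k})^{\top}[\tl{\phi}(X^{k})\tl{\phi}(X^{k})^{\top}+\lambda I_{n_{k}}]^{-1}\tl{y}_{k}$, substitute the SVDs, and read off $O^{\tau_S \rightarrow k}_{PCA}$ and $M_{k}$ using $T_{k-1,:d}=R_{k-1,d:}R_{k-1,d:}^{\top}$. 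The only presentational difference is that the paper packages your ridge-regression-in-residual-coordinates step as a standalone induction (Corollary~\ref{cor:convergence_pca_finite_case}), whose inductive step solves exactly the quadratic objective you write down, so the two arguments coincide in substance.
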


\begin{proof}
See Appendix Section~\ref{proof:forgetting_pca}.
\end{proof}

Corollary~\ref{corr:forgetting_pca} underlines the difference with GEM-NT as this time the double projection are on the residuals subspace $R_{k-1,:d}$ containing the orthogoanl vector to the features map $\nabla_{\w}f(x)$ instead of the loss function gradient. 

\begin{remark} \ 

\begin{itemize}
\item PCA is helpful in datasets where the eigenvalues are decreasing exponentially since keeping a small number of components can leverage a large information and explain a great part of the variance. Projecting orthogonally to these main components will lead to small forgetting if $\sigma_{\tau,d+1}$ is small.
\item On the other hand, unfavourable situations where data are spread uniformly along all directions (i.e, eigenvalues are uniformly equals ) will requires to keep all components and a larger memory. As an example, we build a worst-case scenario in Appendix Section \ref{toy:worst_case_pca} where OGD is performing better than PCA-OGD.
\end{itemize}
\end{remark}



Similarly as the previous case, we can bound the double projection on $R_{k-1,:d}$ with the corresponding diagonal matrix $\Theta^{\tau_S \rightarrow R_{k,:d}}$. Additionally, the CF is bounded by $\sigma_{\tau_S,d+1}$ which is due to the orthogonal projection to the first $d$ principal directions. The upper bound of the CF is given by:
\begin{align}
&\Delta^{\tau_S \rightarrow \tau_T}(X^{\tau}) \leq  \\
&\textcolor{bluecustom}{\mathbf{\sigma_{\tau_S,d+1}^{2}}} \sum_{k=\tau_S+1}^{\tau_T}\textcolor{bluecustom}{\mathbf{\norm{\Theta^{\tau_S \rightarrow R_{k-1,:d}}}_{2}^{2}\norm{\Theta^{k \rightarrow R_{k-1,:d}}}_{2}^{2}}}\norm{M_{k}\tl{y}^{k}}_{2}^{2}  \nonumber
\end{align}

Note that in contrast with Eq.~\eqref{eq:bound-GEM-TN}, the first term in the upper bound is the ($d+1$)-th singular value of $\phi(X^{\tau_S})$, which is due to the PCA step of \algoname. 
A summary of the forgetting properties of the described methods can be found in Table~\ref{tab:summary_alg} in Appendix.

\section{Experiments}

In this section, we study the impact of the NTK overlap matrix on the forgetting by validating Corollary~\ref{corr:angle_forgetting_sgd}. We then illustrates how \algoname efficiently captures and compresses the information in datasets (Corollary~\ref{corr:forgetting_pca}. Finally, we benchmark \algoname on standard CL baselines.

\subsection{Low eigenvalue of the NTK overlap matrix induces smaller drop in performance}

\subparagraph{Objective :} As presented in Corollary~\ref{corr:angle_forgetting_sgd}, we want to assess the effect of the eigenvalues of the NTK overlap matrix on the forgetting. 

\subparagraph{Experiments :} We measure the drop in accuracy for task $1$ until task $15$ on Rotated MNIST with respect to the maximum eigenvalue of the NTK overlap matrix $O^{1 \rightarrow 15}$. 

\subparagraph{Results :} Figure~\ref{fig:drop_in_performance} shows the drop in accuracy between task $1$ and task $15$ for Rotated MNIST versus the largest eigenvalue of $O^{1 \rightarrow 15}$. As expected low eigenvalues leads to a smaller drop in accuracy and thus less forgetting. \algoname improves upon OGD, having from $7\%$ to $10\%$ less drop in performance.


\begin{figure}[ht]
   \hspace{-0.5em}
    \includegraphics[width=0.9\linewidth]{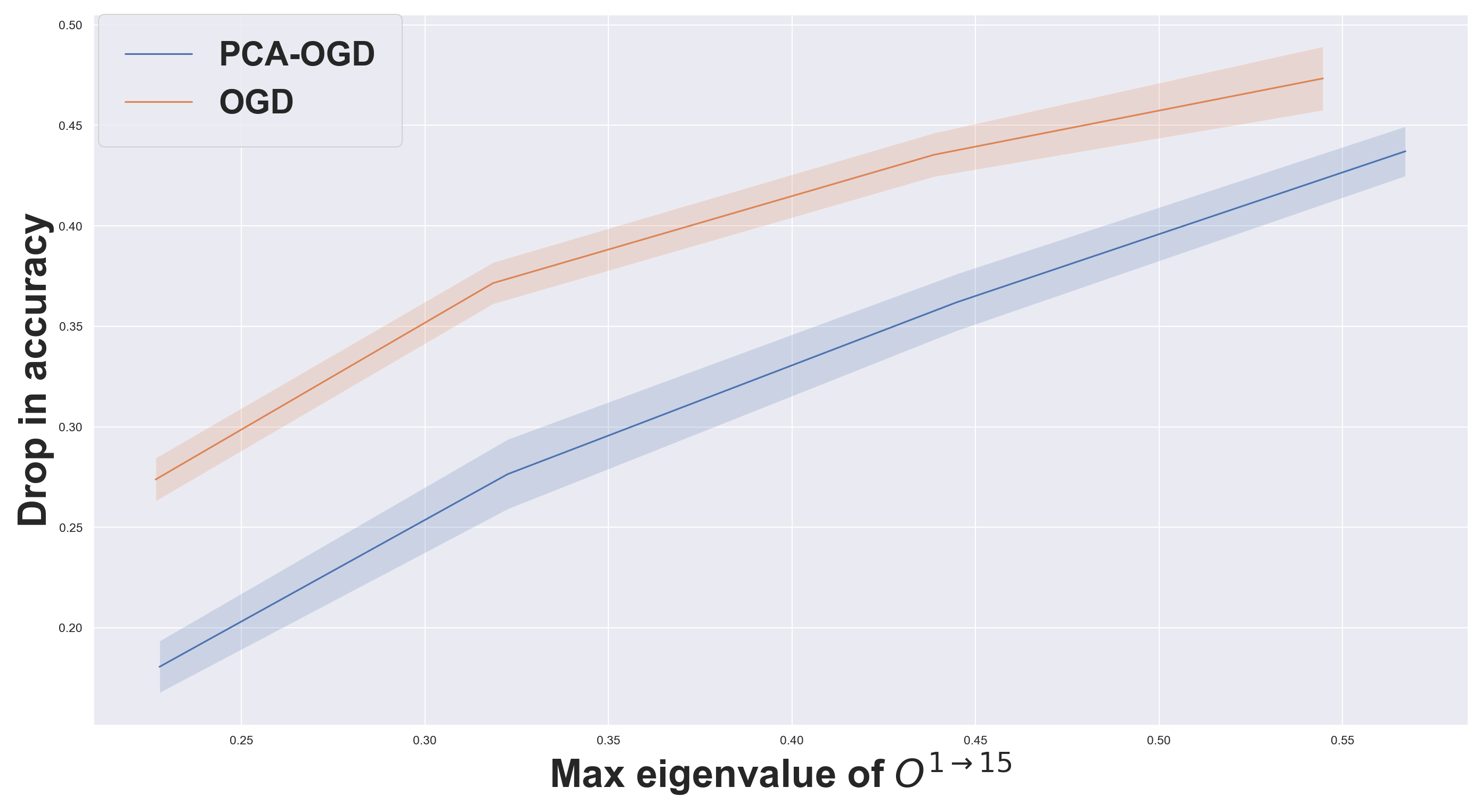}
    \caption{Drop in performance with respect to the maximum eigenvalue for Rotated MNIST (averaged over $5$ seeds $\pm 1$ std).}
    \label{fig:drop_in_performance}
\end{figure}

\subsection{\algoname reduces forgetting by efficiently leveraging structure in the data}

\subparagraph{Objective :} We show how capturing the top $d$ principal directions helps reducing Catastrophic Forgetting (Corollary~\ref{corr:forgetting_pca}).

\subparagraph{Experiments :} We compare the spectrum of the NTK overlap matrix for different methods: SGD, GEM-NT, OGD and \algoname, for different memory sizes.

\subparagraph{Results: } We visualize the effect of the memory size on the forgetting through the eigenvalues of the NTK overlap matrix $O^{\tau_S \rightarrow \tau_T}$. To unclutter the plot, Figure~\ref{fig:rotated_comparison_wrt_buffer_size} only shows the results for memory sizes of $25$ and $200$. Because \algoname compresses the information in a few number of components, it has lower eigenvalues than both OGD and GEM-NT and the gap gets higher when increasing the memory size to $200$. Table~\ref{fig:dataset_explained_variance} in the Appendix confirms those findings by seeing that with $200$ components one can already explain $90.71 \%$ of the variance. 

Finally, the eigenvalues of SGD are higher than those of projection methods since it does not perform any projection of the source or target task.


\begin{figure}[ht]
    \hspace{-1em}
    \includegraphics[width=1.05\linewidth]{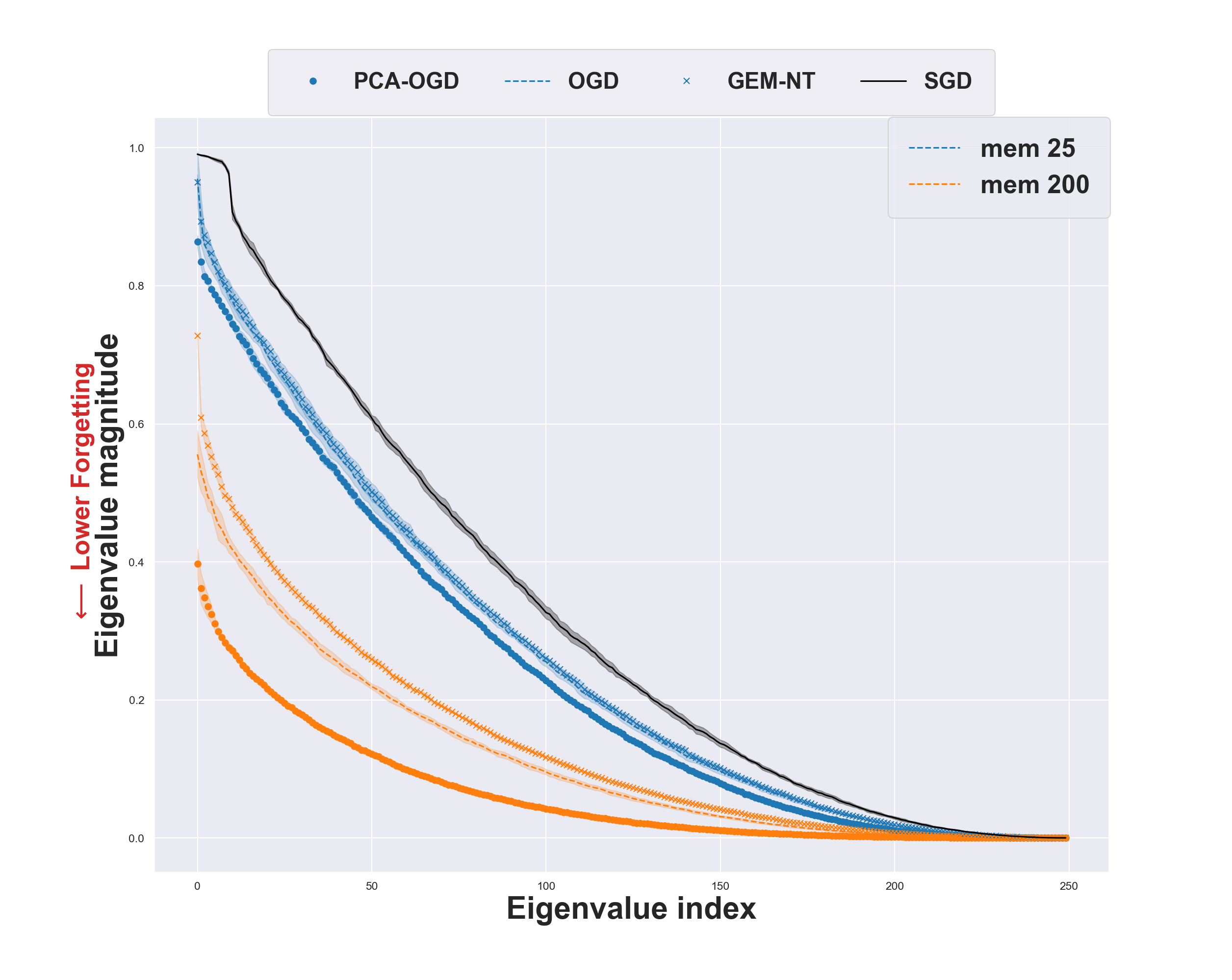}
    \caption{Comparison of the eigenvalues of $O^{1 \rightarrow 2}$ on \textbf{Rotated MNIST} with increasing memory size. Lower values imply less forgetting (averaged over $5$ seeds $\pm 1$ std).}
    \label{fig:rotated_comparison_wrt_buffer_size}
\end{figure}

Finally, the final accuracies on Rotated and Permuted MNIST are reported in~Figure~\ref{fig:rotated_bar2} for the first seven tasks. In Rotated MNIST, we can see that \algoname is twice more memory efficient than OGD: with a memory size of $100$ \algoname has comparable results to OGD with a memory size $200$. Interestingly, while the marginal increase for \algoname is roughly constant going from memory size $25$ to $50$ or $50$ to $100$, OGD incurs a high increase from memory size $100$ to $200$ while below that threshold the improvement is relatively small.

\begin{figure}[ht]
   \hspace{-0em}
    \includegraphics[width=1.0\linewidth]{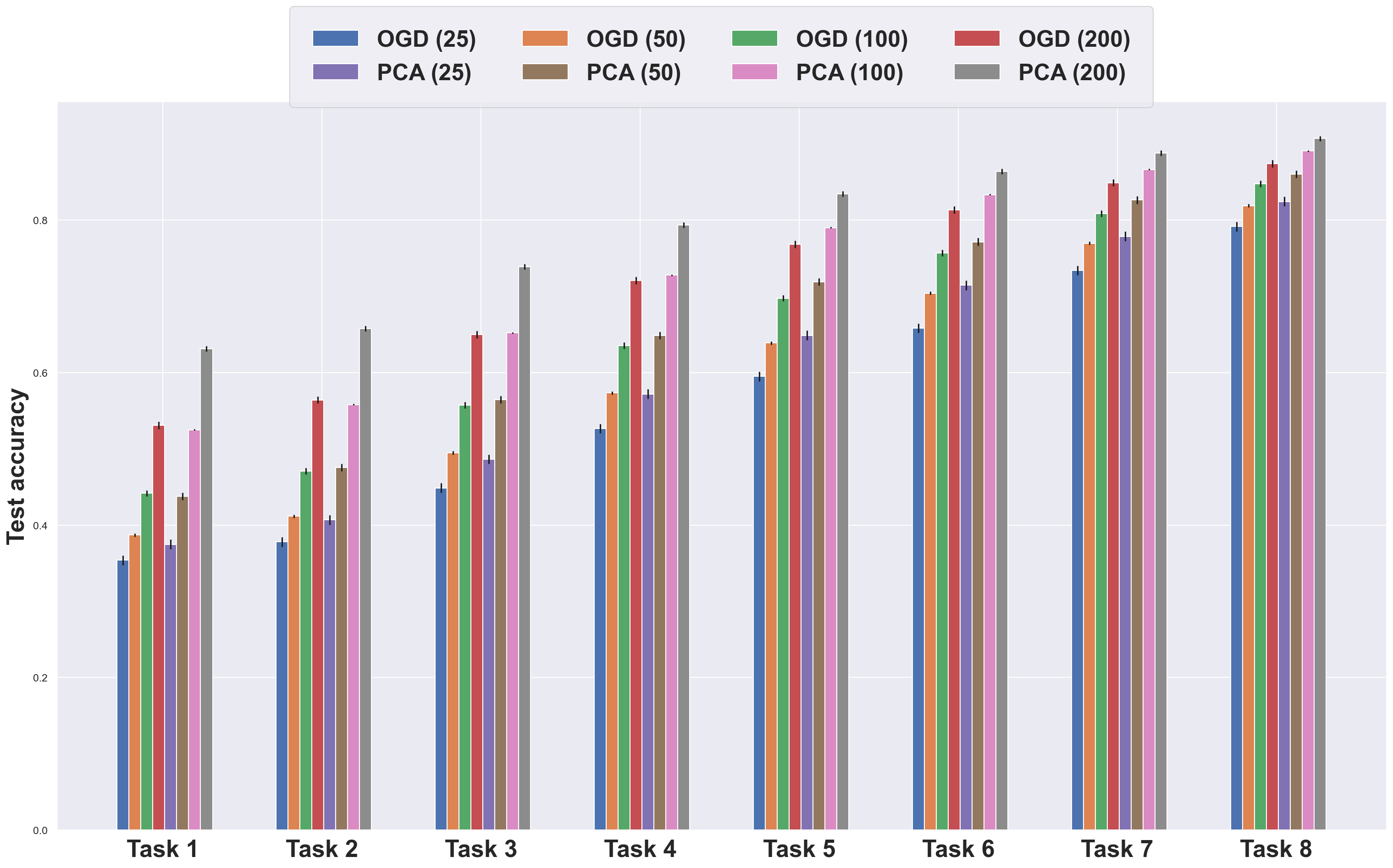}
    \caption{Final accuracy on \textbf{Rotated MNIST} for different memory size (averaged over $5$ seeds $\pm 1$ std). OGD needs twice as much memory as \algoname in order to achieve the same performance (i.e compare OGD (200) and PCA (100).}
    \label{fig:rotated_bar2}
\end{figure}

We ran OGD and PCA-OGD on a counter-example dataset (Permuted MNIST), where there is no structure within the dataset (see Appendix~\ref{sec:permuted_mnist}). In this case, \algoname is less efficient since it needs to keep more principal components than in a structured dataset setting.

\subsection{General performance of \algoname against baselines}

\subparagraph{Objective and Experiments :} We compare \algoname against other baseline methods: SGD, A-GEM \citep{agem} and OGD \citep{farajtabar2020orthogonal}. Additionally to the final accuracies, we report the \textbf{Average Accuracy} $A_{T}$ and \textbf{Forgetting Measure} $F_{T}$~\citep{gem,agem}. We run AGEM instead of GEM-NT which is faster with comparable results~\citep{agem} (since GEM-NT is solving a quadratic programming optimization at each iteration step). Definition of these metrics and full details of the experimental setup can be found in Appendix~\ref{app:experiments_details}.

\subparagraph{Results :} The results are summarized in Table~\ref{tab:general_results} (additional results are presented in Appendix~\ref{app:experiments_details}). Overall, \algoname obtains comparable results to A-GEM. A-GEM has the advantage of accounting for the NTK changes by updating it while \algoname and OGD are storing the gradients from previous iteration. The later therefore project updates orthogonally to outdated gradients. This issue has also been mentioned in \citep{bennani2020generalisation}. Note the good performance of \algoname in Split CIFAR where the dataset size is $2,500$~(making the NTK assumption more realistic) and similar patterns are seen across tasks (CIFAR100 dataset is divided into $20$ superclasses within which we can count $5$ subfamilies hence having a pattern across tasks. To examine this hypothesis, we plot the NTK changes for different datasets in Appendix~\ref{sec:ntk_variation}. We can indeed see that the NTK does not vary anymore after $1$ task for Split CIFAR while it increases linearly for MNIST datasets which confirms our hypothesis.

\begin{table}[ht]
    \centering
\resizebox{0.46\textwidth}{!}{
\begin{tabular}{c|c c  c c c }
\hline\hline
     & SGD  & EWC & A-GEM & OGD & \algoname  \\ \hline
     &  \multicolumn{4}{c}{Permuted MNIST} \\ \hline
 $A_{T}$  & 76.81 $\pm$ 1.36  & 79.71 $\pm$ 0.52  &  \textbf{ 83.4 $\pm$ 0.43}  & 80.95 $\pm$ 0.5  & 81.44 $\pm$ 0.62 \\ 
 $F_{T}$  & 14.88 $\pm$ 1.64    &  \textbf{ 3.81 $\pm$ 0.47 } & 7.29 $\pm$ 0.45    & 9.72 $\pm$ 0.51    & 9.11 $\pm$ 0.65    \\ \hline
             &  \multicolumn{4}{c}{Rotated MNIST} \\ \hline
   $A_{T}$  & 66.07 $\pm$ 0.47  & 76.2 $\pm$ 0.62  &  \textbf{ 83.52 $\pm$ 0.22}  & 77.42 $\pm$ 0.35  & 82.05 $\pm$ 0.58 \\ 
 $F_{T}$  & 29.57 $\pm$ 0.56    & 13.44 $\pm$ 0.82    &  \textbf{ 9.86 $\pm$ 0.28 } & 16.52 $\pm$ 0.46    & 11.67 $\pm$ 0.65    \\ \hline
        &  \multicolumn{4}{c}{Split MNIST} \\ \hline
  $A_{T}$  & 95.1 $\pm$ 1.08  & 95.06 $\pm$ 1.15  & 94.25 $\pm$ 1.62  &  \textbf{ 96.05 $\pm$ 0.34}  & 95.96 $\pm$ 0.29 \\ 
 $F_{T}$  & 2.02 $\pm$ 1.48    & 2.08 $\pm$ 1.56    & 2.82 $\pm$ 1.72    & 0.37 $\pm$ 0.21    &  \textbf{ 0.28 $\pm$ 0.15 } \\ \hline
              &  \multicolumn{4}{c}{Split CIFAR} \\ \hline
 $A_{T}$  & 56.11 $\pm$ 1.65  & 65.45 $\pm$ 0.97  & 47.55 $\pm$ 2.4  & 69.77 $\pm$ 0.72  &  \textbf{ 72.7 $\pm$ 0.97} \\ 
 $F_{T}$  & 22.69 $\pm$ 2.18    & 6.56 $\pm$ 1.49    & 31.36 $\pm$ 2.58    & 8.27 $\pm$ 0.31    &  \textbf{ 5.39 $\pm$ 0.85 } \\ \hline
\end{tabular}}
    \caption{Average Accuracy and Forgetting for all baselines considered across the datasets ($5$ seeds).}
    \label{tab:general_results}
\end{table}

\section{Conclusion}


We present a theoretical analysis of CF in the NTK regime, for SGD and the projection based algorithms OGD, GEM-NT and \algoname. We quantify the impact of the tasks similarity on CF through the NTK overlap matrix. Experiments support our findings
that the overlap matrix is crucial in reducing CF and our proposed method \algoname efficiently mitigates CF. However, our analysis relies on the core assumption of overparameterisation, an important next step is to account for the change of NTK over time. We hope this analysis opens new directions to study the properties of Catastrophic Forgetting for other
Continual Learning algorithms.

\section{Acknowledgments}
The authors would like to thank Joelle Pineau for useful discussions and feedbacks. Finally, we thank Compute Canada for providing computational ressources used in this project.

\clearpage
\newpage



\onecolumn
\section{Appendix}


\subsection{Proof of Lemma~\ref{lemma:def_forgetting}}  \label{proof:def_forgetting}
For this proof, we will use the result of Thm. 1 from \citep{bennani2020generalisation} (particularly Remark 1) and notice that the expression of $f_{\tau_T}^{\star}$ can be espressed recursively with respect to $f_{\tau_S}^{\star}$:
\begin{proof}
 \begin{align*}
     f_{\tau_T}^{\star}(x)&=f_{\tau_T-1}^{\star}(x)+\langle \nabla_{\w}f_{\tau_T-1}^{\star}(x), \w_{\tau_T}^{\star}-\w_{\tau_T-1}^{\star}\rangle \\
     &=f_{\tau_T-k}^{\star}(x) + ... + \langle \nabla_{\w}f_{\tau_T-2}^{\star}(x), \w_{\tau_T-1}^{\star}-\w_{\tau_T-2}^{\star}\rangle +\langle \nabla_{\w}f_{\tau_T-1}^{\star}(x), \w_{\tau_T}^{\star}-\w_{\tau_T-1}^{\star}\rangle \\
     &=f_{\tau_S}^{\star}(x) +\sum_{k=\tau_S+1}^{\tau_T} \langle \nabla_{\w}f_{k}^{\star}(x), \w_{k}^{\star}-\w_{k-1}^{\star}\rangle \\
         &=f_{\tau_S}^{\star}(x) +\sum_{k=\tau_S+1}^{\tau_T} \langle \nabla_{\w}f_{0}(x), \w_{k}^{\star}-\w_{k-1}^{\star}\rangle  \quad \text{(NTK constant)} \\
         &=f_{\tau_S}^{\star}(x) +\langle \nabla_{\w}f_{0}(x), \w_{\tau_T}^{\star}-\w_{\tau_S}^{\star}\rangle 
 \end{align*}
 where we used constant NTK assumption, i.e $\nabla_{\w}f_{\tau}^{\star}(x)=\nabla_{\w_{0}}f(x)$ , $\forall \tau \in [T]$.
 
Using the fact that the kernel is given by $\phi(x)=\nabla_{w_{0}}f(x)$, we have that:
\begin{align*}
    \delta^{\tau_S \rightarrow \tau_T}(X^{\tau_S})&=f_{\tau_T}^{\star}(X^{\tau_S})-f_{\tau_S}^{\star}(X^{\tau_S}) \\
    &=\langle \phi(X^{\tau_S}), \w_{\tau_T}^{\star}-\w_{\tau_S}^{\star} \rangle \\
     \Delta^{\tau_S \rightarrow \tau_T}(X^{\tau_S})&= \norm{\phi(X^{\tau_S}) (\w_{\tau_T}^{\star}-\w_{\tau_S}^{\star})}_{2}^{2}
\end{align*}
This concludes the proof.
\end{proof}
 
\subsection{Proof of Theorem~\ref{thm:forgetting_sgd}}  \label{proof:forgetting_sgd}
For this proof, we will decompose the drift from task $\tau_S$ to $\tau_T$ into a telescopic sum. We will then use SVD to factorize the expression of $(\w_{\tau}^{\star}-\w_{\tau-1}^{\star})$ and get the upper bound showed.
\begin{proof}
\begin{align}
    \Delta^{\tau_S \rightarrow \tau_T}(X^{\tau_S})&=\norm{\phi(X^{\tau_S})(\w_{\tau_T}^{\star}-\w_{\tau_S}^{\star})}_{2}^{2}  \quad (\text{Lemma~\ref{lemma:def_forgetting}}) \\
    &=\norm{\sum_{k=\tau_S+1}^{\tau_T}\phi(X^{\tau_S})(\w_{k}^{\star}-\w_{k-1}^{\star})}_{2}^{2} \\
     & = \norm{ \sum_{k=\tau_S+1}^{\tau_T} \phi(X^{\tau_S})\underbrace{\phi(X^{k})^{\top}[\phi(X^{k})\phi(X^{k})^{\top}+\lambda I_{n_{k}}]^{-1}\tl{y}_{k}}_{\text{from Thm. 1 of \citep{bennani2020generalisation}}}}_{2}^{2} \\
     &= \norm{\sum_{k=\tau_S+1}^{\tau_T}  U_{\tau_S}\Sigma_{\tau_S}V_{\tau_S}^{\top}V_{k}\Sigma_{k}U_{k}^{\top}[U_{k}\Sigma_{k}^{2}U_{k}^{\top}+\lambda I_{n_{k}}]^{-1}\tl{y}_{k}}_{2}^{2}  \quad (\text{SVD})\\
     &= \norm{ \sum_{k=\tau_S+1}^{\tau_T}  U_{\tau_S}\Sigma_{\tau_S}\underbrace{V_{\tau_S}^{\top}V_{k}}_{O^{\tau_S \rightarrow k}_{SGD}}\underbrace{\Sigma_{k}[\Sigma_{k}^{2}+\lambda I_{n_{k}}]^{-1}U_{k}^{\top} }_{M_{k}}\tl{y}_{k}}_{2}^{2} \\
\end{align}
Where we used the SVD $\phi(X^{\tau}) = U_{\tau} \Sigma_{\tau} V_{\tau}^T, \forall \tau \in [T]$. This concludes the proof. 
\end{proof}

\subsection{Proof of Corolary~\ref{corr:angle_forgetting_sgd}}  \label{proof:angle_forgetting_sgd}
For this proof, we will bound the Catastrophic Forgetting as a function of the principal angles between the source and target subspaces. Indeed, given two subspace $\tau_S$ and $\tau_T$ represented by their orthonormal basis concatenated respectively in $V_{\tau_S}$ and $V_{\tau_T}$, the elements of the diagonal matrix $\Theta^{\tau_S \rightarrow \tau_T}$ resulting from the SVD of $V_{\tau_S}^{\top}V_{\tau_T}$ are the cosines of the principal angles between these two subspace \citep{wedin1983angles,zhu2013angles}.
\begin{proof}

\begin{align}
    \Delta^{\tau_S \rightarrow \tau_T}(X^{\tau_S}) & \leq \sum_{k=\tau_S+1}^{\tau_T} \norm{ U_{\tau_S}\Sigma_{\tau_S}V_{\tau_S}^{\top}V_{k}\Sigma_{k}[\Sigma_{k}^{2}+\lambda I_{n_{k}}]^{-1}U_{k}^{\top} \tl{y}_{k}}_{2}^{2} \\
      & \leq \sum_{k=\tau_S+1}^{\tau_T} \norm{ U_{\tau_S} \Sigma_{\tau_S}}_{2}^{2} \norm{V_{\tau_S}^{\top}V_{k}}_{2}^{2} \norm{\Sigma_{k}[\Sigma_{k}^{2}+\lambda I_{n_{k}}]^{-1}U_{k}^{\top} \tl{y}_{k}}_{2}^{2} \quad (\text{sub-multiplicativity of norm 2}) \\
       & \leq \sum_{k=\tau_S+1}^{\tau_T} \norm{  \Sigma_{\tau_S}}_{2}^{2} \norm{V_{\tau_S}^{\top}V_{k}}_{2}^{2} \norm{M_{k} \tl{y}_{k}}_{2}^{2} \quad  \quad (U_{\tau_S} \text{ is an orthonormal matrix})\\
       & \leq \sigma_{\tau_S,1}^{2} \sum_{k=\tau_S+1}^{\tau_T}  \norm{Y \Theta^{\tau_S \rightarrow k}Z^{\top}}_{2}^{2} \norm{M_{k} \tl{y}_{k}}_{2}^{2} \quad  \quad (\text{SVD})\\
       & \leq \sigma_{\tau_S,1}^{2} \sum_{k=\tau_S+1}^{\tau_T}  \norm{ \Theta^{\tau_S \rightarrow k}}_{2}^{2} \norm{M_{k} \tl{y}_{k}}_{2}^{2} \quad  \quad ( Y,Z \text{ are orthonormal  matrices})\\
\end{align}
where $Y \Theta^{\tau_S \rightarrow k}Z^{\top}$ is the SVD of $V_{\tau_S}^{\top}V_{k}$. This concludes the proof.
\end{proof}

\subsection{Proof of Corollary~\ref{corr:cf_gem_nt}} \label{proof:gem_nt}
We first need to prove a corollary that is exactly the same as Corollary~\ref{cor:convergence_pca_finite_case} (shown below), the difference lies in the kernel definition.
Under the same notation as in Corollary~\ref{cor:convergence_pca_finite_case}, the solution after training on task $\tau$ for GEM-NT is such that:
\begin{align}
    \w_{\tau}^{\star}-\w_{\tau-1}^{\star}=\overline{\phi}_{\tau}(X^{\tau})^{\top}(\kappa_{\tau}(X^{\tau},X^{\tau})+\lambda I_{n_{\tau}})^{-1}\tilde{y}_{\tau}
\end{align}
where:
\begin{align*}
\kappa_{\tau}(x,x')&=\overline{\phi}_{\tau}(x)\overline{\phi}_{\tau}(x')^{\top}, \\
\overline{\phi}_{\tau}(x)&=\phi(x)T_{\tau-1}, \\
T_{\tau}&=I_{p}-G_{\tau}(G_{\tau})^{\top}, \\
\tilde{y}_{\tau}&=y_{\tau}-y_{\tau-1 \rightarrow \tau}, \\
y_{\tau-1 \rightarrow \tau}&=f_{\tau-1}^{\star}(X^{\tau}), \\
\end{align*}

\begin{proof} of Corollary~\ref{corr:cf_gem_nt}\
Similarly to Proof of Theorem~\ref{thm:forgetting_sgd}:
\begin{align}
 \Delta^{\tau_S \rightarrow \tau_T}(X^{\tau_S})&=\norm{\phi(X^{\tau_S})(\w_{\tau_T}^{\star}-\w_{\tau_S}^{\star})}_{2}^{2}  \quad (\text{Lemma~\ref{lemma:def_forgetting}}) \\
  &=\norm{\sum_{k=\tau_S+1}^{\tau_T}\phi(X^{\tau_S})(\w_{k}^{\star}-\w_{k-1}^{\star})}_{2}^{2} \\
   & = \norm{ \sum_{k=\tau_S+1}^{\tau_T} \phi(X^{\tau_S})\underbrace{\overline{\phi}(X^{k})^{\top}[\overline{\phi}(X^{k})\overline{\phi}(X^{k})^{\top}+\lambda I_{n_{k}}]^{-1}\overline{y}_{k}}_{\text{as shown above}}}_{2}^{2} \\
      &= \norm{\sum_{k=\tau_S+1}^{\tau_T}  U_{\tau_S}\Sigma_{\tau_S}V_{\tau_S}^{\top}T_{k-1}^{\top}V_{k}\Sigma_{k}U_{k}^{\top}[\overline{\phi}(X^{k})\overline{\phi}(X^{k})^{\top}+\lambda I_{n_{k}}]^{-1}\overline{y}_{k}}_{2}^{2} \quad (\text{SVD}) \\
        &= \norm{\sum_{k=\tau_S+1}^{\tau_T}  U_{\tau_S}\Sigma_{\tau_S}\underbrace{V_{\tau_S}^{\top}T_{k-1}T_{k-1}^{\top}V_{k}}_{O^{\tau_S \rightarrow k}_{\text{GEM-NT}}}\underbrace{\Sigma_{k}U_{k}^{\top}[\overline{\phi}(X^{k})\overline{\phi}(X^{k})^{\top}+\lambda I_{n_{k}}]^{-1}}_{M_k}\overline{y}_{k}}_{2}^{2}  \quad (T_{k-1})^{n}=T_{k-1} , \forall n \geq 1 \\
\end{align}
This concludes the proof.
\end{proof}

\subsection{Forgetting for \algoname} \label{proof:forgetting_pca}
To prove the forgetting expression for \algoname, we will use a corollary arising naturally from Theorem 1 of \citep{bennani2020generalisation} which extends the expression of the learned weights $(\w_{\tau+1}^{\star}-\w_{\tau}^{\star})$ from the \textbf{infinite} to the \textbf{finite} memory case. The proof will be shown after the proof of Corollary~\ref{corr:forgetting_pca} for the flow of the understanding.

\begin{corr} [Convergence of \algoname under finite memory] \label{cor:convergence_pca_finite_case} \  

Given $\cT_{1},...,\cT_{T}$ a sequence of tasks.
If the learning rate satisfies: $\eta_{\tau}< \frac{1}{\norm{\kappa_{\tau}(X^{\tau},X^{\tau}) +\lambda I_{n_{\tau}}}} $, $\kappa_\tau, \forall \tau \in [T]$ is invertible with a weight decay regularizer  $\lambda>0$, the solution after training on task $\tau$ is such that: \\
\begin{equation} \label{eq:update_pca_ogd}
\w_{\tau}^{\star}-\w_{\tau-1}^{\star}=\tilde{\phi}_{\tau}(X^{\tau})^{\top}(\kappa_{\tau}(X^{\tau},X^{\tau})+\lambda I_{n_{\tau}})^{-1}\tilde{y}_{\tau}
\end{equation}
where:
\begin{align*}
\kappa_{\tau}(x,x')&=\tilde{\phi}_{\tau}(x)\tilde{\phi}_{\tau}(x')^{\top}, \\
\tilde{\phi}_{\tau}(x)&=\phi(x)T_{\tau-1,:d}, \\
T_{\tau,:d}&=I_{p}-P_{\tau,:d}P_{\tau,:d}^{\top}, \\
\phi(x)&=\nabla_{\omega_{0}}f_{0}^{\star}(x), \\
\tilde{y}_{\tau}&=y_{\tau}-y_{\tau-1 \rightarrow \tau}, \\
y_{\tau-1 \rightarrow \tau}&=f_{\tau-1}^{\star}(X^{\tau}), \\
\end{align*}
where $T_{0,:d}=I_{p}$ since there are no previous task when training on task $1$.
\end{corr}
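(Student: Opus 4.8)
The plan is to show that running \algoname on task $\tau$ is equivalent to solving an (unconstrained) ridge regression problem in which the original NTK feature map $\phi$ is replaced by the \emph{projected} feature map $\tilde{\phi}_\tau(x) = \phi(x)T_{\tau-1,:d}$, and then to read off the closed form exactly as in Theorem~1 of \citep{bennani2020generalisation} applied to this modified kernel. Throughout I write $\Delta\w = \w - \w_{\tau-1}^\star$, I note that at the start of task $\tau$ we have $\Delta\w = 0$, and I recall that $T_{\tau-1,:d} = I_p - P_{\tau-1,:d}P_{\tau-1,:d}^\top$ is a \emph{symmetric idempotent} projector onto the residual subspace.

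First I would set up the NTK-linearized objective for task $\tau$. Under the constant-NTK assumption the prediction is $f_\w(x) = f_{\tau-1}^\star(x) + \langle \phi(x),\Delta\w\rangle$, so the regularized mean-squared objective is $\mathcal{L}^\tau_\lambda(\w) = \tfrac{1}{2}\norm{\phi(X^\tau)\Delta\w - \tilde{y}_\tau}_2^2 + \tfrac{\lambda}{2}\norm{\Delta\w}_2^2$ with $\tilde{y}_\tau = y_\tau - f_{\tau-1}^\star(X^\tau)$. The \algoname update multiplies each gradient by the projector, i.e. $\Delta\w \leftarrow \Delta\w - \eta_\tau T_{\tau-1,:d}\nabla_\w\mathcal{L}^\tau_\lambda$. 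Since every increment lies in $\mathrm{range}(T_{\tau-1,:d})$ and $\Delta\w$ starts at $0$, a one-line induction gives the invariance $T_{\tau-1,:d}\Delta\w = \Delta\w$ at every iteration.

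Next I would exploit this invariance together with the symmetry and idempotency of $T_{\tau-1,:d}$ to rephrase everything through $\tilde{\phi}_\tau$: from $T_{\tau-1,:d}\Delta\w = \Delta\w$ I get $\phi(X^\tau)\Delta\w = \tilde{\phi}_\tau(X^\tau)\Delta\w$, and from symmetry I get $T_{\tau-1,:d}\phi(X^\tau)^\top = \tilde{\phi}_\tau(X^\tau)^\top$. The step-size condition $\eta_\tau < 1/\norm{\kappa_\tau(X^\tau,X^\tau)+\lambda I_{n_\tau}}$ guarantees the projected iteration is a contraction on $\mathrm{range}(T_{\tau-1,:d})$, so it converges to the unique stationary point there; setting the projected gradient $T_{\tau-1,:d}\nabla_\w\mathcal{L}^\tau_\lambda = \tilde{\phi}_\tau(X^\tau)^\top(\tilde{\phi}_\tau(X^\tau)\Delta\w - \tilde{y}_\tau) + \lambda\Delta\w$ to zero yields the normal equation $[\tilde{\phi}_\tau(X^\tau)^\top\tilde{\phi}_\tau(X^\tau) + \lambda I_p]\Delta\w = \tilde{\phi}_\tau(X^\tau)^\top\tilde{y}_\tau$.

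Finally I would invert and apply the push-through identity $(A^\top A + \lambda I_p)^{-1}A^\top = A^\top(AA^\top + \lambda I_{n_\tau})^{-1}$ with $A = \tilde{\phi}_\tau(X^\tau)$, obtaining $\Delta\w = \tilde{\phi}_\tau(X^\tau)^\top[\tilde{\phi}_\tau(X^\tau)\tilde{\phi}_\tau(X^\tau)^\top + \lambda I_{n_\tau}]^{-1}\tilde{y}_\tau$, and then identify $\tilde{\phi}_\tau(X^\tau)\tilde{\phi}_\tau(X^\tau)^\top = \kappa_\tau(X^\tau,X^\tau)$ to recover the claimed expression; the base case $T_{0,:d} = I_p$ simply collapses this to the ordinary single-task ridge solution. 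I expect the main obstacle to be the rigorous convergence argument rather than the algebra: one must verify that restricting the dynamics to $\mathrm{range}(T_{\tau-1,:d})$ makes the objective strongly convex with curvature governed precisely by $\kappa_\tau(X^\tau,X^\tau)+\lambda I_{n_\tau}$, so that the stated step size produces a genuine contraction to the stationary point. This is exactly the point at which the reduction to Theorem~1 of \citep{bennani2020generalisation}, now invoked with the projected kernel $\kappa_\tau$, carries the argument, and one should also check that the solution of the normal equation indeed lies in $\mathrm{range}(T_{\tau-1,:d})$ (which follows because $\mathrm{range}(P_{\tau-1,:d})$ is an eigenspace of $\tilde{\phi}_\tau(X^\tau)^\top\tilde{\phi}_\tau(X^\tau)+\lambda I_p$), so the constrained and unconstrained minimizers coincide.
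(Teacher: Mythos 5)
Your proposal is correct and follows essentially the same route as the paper's own proof: the paper likewise reduces each task of \algoname to a quadratic regression in the projected feature map $\tilde{\phi}_{\tau}(\cdot)=\phi(\cdot)T_{\tau-1,:d}$ --- by reparametrizing the update as $\w-\w_{\tau-1}^{\star}=T_{\tau-1,:d}\,\tl{\w}_{\tau}$ inside an induction over tasks --- and then reads off the kernel-form solution, exactly as you do via the push-through identity. If anything, your version is slightly more careful on two points the paper glosses over: you retain the weight-decay term $\lambda$ in the per-task objective (the paper's inductive step drops it from the displayed minimization, so strictly speaking its ``unique minimizer'' claim needs the regularizer or a minimum-norm argument), and you explicitly verify that the unconstrained normal-equation solution lies in $\mathrm{range}(T_{\tau-1,:d})$, which the paper leaves implicit.
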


\begin{proof} of Corollary~\ref{corr:forgetting_pca}\

Similarly to Proof of Theorem~\ref{thm:forgetting_sgd}:
\begin{align}
    \Delta^{\tau_S \rightarrow \tau_T}(X^{\tau_S})&=\norm{\phi(X^{\tau_S})(\w_{\tau_T}^{\star}-\w_{\tau_S}^{\star})}_{2}^{2}  \quad (\text{Lemma~\ref{lemma:def_forgetting}}) \\
    &=\norm{\sum_{k=\tau_S+1}^{\tau_T}\phi(X^{\tau_S})(\w_{k}^{\star}-\w_{k-1}^{\star})}_{2}^{2} \\
     & = \norm{ \sum_{k=\tau_S+1}^{\tau_T} \phi(X^{\tau_S})\underbrace{\tl{\phi}(X^{k})^{\top}[\tl{\phi}(X^{k})\tl{\phi}(X^{k})^{\top}+\lambda I_{n_{k}}]^{-1}\tl{y}_{k}}_{\text{from Corollary~\ref{cor:convergence_pca_finite_case} }}}_{2}^{2} \\
     &= \norm{\sum_{k=\tau_S+1}^{\tau_T}  U_{\tau_S}\Sigma_{\tau_S}V_{\tau_S}^{\top}T_{k-1,:d}^{\top}V_{k}\Sigma_{k}U_{k}^{\top}[\tl{\phi}(X^{k})\tl{\phi}(X^{k})^{\top}+\lambda I_{n_{k}}]^{-1}\tl{y}_{k}}_{2}^{2}  \quad (\text{SVD})\\
      &= \norm{\sum_{k=\tau_S+1}^{\tau_T}  U_{\tau_S}\Sigma_{\tau_S}\underbrace{V_{\tau_S}^{\top}R_{k-1,d:}R_{k-1,d:}^{\top}V_{k}}_{O^{\tau_S \rightarrow k}_{PCA}}\Sigma_{k}\underbrace{U_{k}^{\top}[\tl{\phi}(X^{k})\tl{\phi}(X^{k})^{\top}+\lambda I_{n_{k}}]^{-1}}_{M_k}\tl{y}_{k}}_{2}^{2}  
\end{align}   

\end{proof}

\begin{proof}[Proof of Corollary~\ref{cor:convergence_pca_finite_case}]
    \begin{align*}
            \intertext{In the same fashion as \citep{bennani2020generalisation}, we prove Corollary \ref{cor:convergence_pca_finite_case} by induction. Our induction hypothesis $H_\tau$ is the
    following : $\cH_\tau$ : For all $k \leq \tau$, Corollary \ref{cor:convergence_pca_finite_case} holds.}
            \intertext{First, we prove that $\cH_1$ holds.}
            \intertext{The proof is straightforward. For the first task, since there were no previous tasks,
            \algoname on this task is the same as SGD.}
            \intertext{Therefore, it is equivalent to minimising the following objective : }
            \argmin_{\w \in \mathbb{R}^{p}} &\norm{f_{0}(X^{1})+\phi(X^{1})(\w - \w_{0}^{\star}) -
            y_{1}}_{2}^{2} +\frac{1}{2}\lambda \norm{\w-\w_{0}}_{2}^{2}
            \intertext{where $\phi(x) = \nabla_{\w_{0}^\star} f_{0}^{\star}(x)$.}
            \intertext{Substituing the residual term $\tl{y}_{1}=y_{1}-f_{0}(X^{1})$, we get:}
            \argmin_{\w \in \mathbb{R}^{p}} &\norm{\phi(X^{1})(\w - \w_{0}^{\star}) -
            \tl{y}_{1}}_{2}^{2} +\frac{1}{2}\lambda \norm{\w-\w_{0}}_{2}^{2}
            \intertext{The objective is quadratic and the Hessian is positive definite, therefore the minimum exists and is unique}
            \w_{1}^{\star} - \w_{0}^{\star} &= \phi(X^{1})^{\top} (\phi(X^{1}) \phi(X^{1})^{\top}+\lambda I_{n_{1}})^{-1}\tl{y}_{1} &\\
            \intertext{Under the NTK regime assumption :}
            f_{1}^{\star}(x) &= f_{0}^{\star}(x) + \nabla_{\w_{0}} f_{0}^{\star}(x)^{\top} (\w_{1}^{\star} - \w_{0}^{\star})
            \intertext{Then, by replacing into $\w_{1}^{\star} - \w_{0}^{\star}$ :}
            f_{1}^{\star}(x) &= f_{0}^{\star}(x) + \nabla_{\w_{0}} f_{0}^{\star}(x) \phi(X^{1})^{\top} (\phi(X^{1}) \phi(X^{1})^{\top}+\lambda I_{n_{1}})
            ^{-1}\tl{y}_{1} &\\
            f_{1}^{\star}(x) &= f_{0}^{\star}(x) + \kappa_1(x,X^{1})  (\kappa_1(X^{1},X^{1})+\lambda I_{n_{1}})^{-1}
            \tl{y}_{1} &\\
            \intertext{Finally :}
            f_{1}^{\star}(x) - f_{0}^{\star}(x) &= \kappa_1(x,X^{1})  ( \kappa_1(X^{1},X^{1})+\lambda I_{n_{1}})^{-1}
            \tl{y}_{1} &\\
            \intertext{Where :} 
            \kappa_{1}(X^{1},X^{1})&=\tl{\phi_{1}}(X^{1})\tl{\phi}_{1}(X^{1})^{\top} \\
            &=\phi(X^{1})T_{0,:d}T_{0,:d}^{\top}\phi(X^{1})^{\top} \\
            &=\phi(X^{1})\phi(X^{1})^{\top} \\
            \intertext{Since there is no previous task and $T_{0,:d}$ contains no eigenvectors yet, we have $T_{0,:d}=I_{p}$ and $\tilde{y}_{1}=y_{1}$.}
            \intertext{This completes the proof of $\cH_1$.}
    \intertext{Let $\tau \in \cN^{\star}$, we assume that $\cH_\tau$ is true, then we show that $\cH_{\tau+1}$ is true.}
    \intertext{At the end of training of task $\tau$, we add the first $d$ eigenvectors of $\phi(X^{\tau})\phi(X^{\tau})^{\top}$ to $P_{\tau-1,:d} \in \mathbb{R}^{p \times (\tau-1)\cdot d}$ to form the matrix $P_{\tau,:d} \in \mathbb{R}^{p \times \tau \cdot d}$ through PCA decomposition}.
    \intertext{The update during the training of task $\tau+1$ is projected orthogonally to the first $d$ components of task $1$ until $\tau$ via the matrix $T_{\tau,:d}$:}
    \w_{\tau+1}(t+1)&=\w_{\tau}^{\star}-\eta T_{\tau,:d}\nabla_{\w}\cL^{\tau}_{\lambda}(\w_{\tau+1}(t)) \\
    \w_{\tau+1}(t+1)-\w_{\tau}^{\star}&=-\eta T_{\tau,:d}\nabla_{\w}\cL^{\tau}_{\lambda}(\w_{\tau+1}(t)) \\
    \w_{\tau+1}(t+1)-\w_{\tau}^{\star}&= T_{\tau,:d} \tl{\w}_{\tau+1}
    \intertext{Where $\eta$ is the learning rate and $T_{\tau,:d}=I_{p}-P_{\tau,:d}P_{\tau,:d}^{\top}$.}
    \intertext{We rewrite the objective by plugging in the variables we just defined. The two objectives are
        equivalent : }
        \argmin_{\tl{\w}_{\tau+1}  \in \mathbb{R}^{p}} &\norm{\underbrace{\phi(X^{\tau+1}) T_{\tau,:d}}_{\phi_{\tau+1}(X^{\tau+1})} \tl{w}_{\tau+1} -
            \tl{y}_{\tau+1}}_2^2  &\\
    \intertext{The optimisation objective is quadratic, unconstrainted, with a positive definite hessian.
            Therefore, an optimum exists and is unique :  }
        \tl{\w}^{\star}_{\tau+1} &= \phi_{\tau+1}(X^{\tau+1})^{\top} (\phi_{\tau+1}(X^{\tau+1}) \phi_{\tau+1}(X^{\tau+1})^{\top} )^{-1}
            \tl{y}_{\tau+1} &\\
             \w_{\tau+1}^{\star}-\w_{\tau}^{\star} &= \phi_{\tau+1}(X^{\tau+1})^{\top} (\phi_{\tau+1}(X^{\tau+1}) \phi_{\tau+1}(X^{\tau+1})^{\top} )^{-1}
            \tl{y}_{\tau+1} &\\
                 \w_{\tau+1}^{\star}-\w_{\tau}^{\star} &= \phi_{\tau+1}(X^{\tau+1})^{\top} (\kappa_{\tau+1}(X^{\tau+1},X^{\tau+1}) )^{-1}\tl{y}_{\tau+1} &\\
        \intertext{Recall from the induction hypothesis of $\cH_\tau$ the general form of $f_{\tau}^{\star}(x)$ :}
         f_{\tau}^{\star}(x) &= f_{\tau-1}^{\star}(x) + \langle \nabla_{\w_{0}} f_{0}^{\star}(x) ,\w_{\tau+1}^{\star}
             - \w_{\tau}^{\star} \rangle &\\
             \intertext{After training on task $\tau+1$ :}
              f_{\tau+1}^{\star}(x) &= f_{\tau-1}^{\star}(x) + \langle \nabla_{\w_{0}} f_{0}^{\star}(x) ,\w_{\tau+1}^{\star}
             - \w_{\tau-1}^{\star} \rangle &\\
              f_{\tau+1}^{\star}(x) &= f_{\tau-1}^{\star}(x) + \langle \nabla_{\w_{0}} f_{0}^{\star}(x) ,\w_{\tau+1}^{\star}
             - \w_{\tau-1}^{\star} + \underbrace{\w_{\tau}^{\star}-\w_{\tau}^{\star}}_{=0} \rangle &\\
             f_{\tau+1}^{\star}(x) &= \underbrace{f_{\tau-1}^{\star}(x) + \langle \nabla_{\w_{0}} f_{0}^{\star}(x) ,\w_{\tau}^{\star}-\w_{\tau-1}^{\star} \rangle}_{f_{\tau}^{\star}(x)} 
             + \langle \nabla_{\w_{0}} f_{0}^{\star}(x) ,\w_{\tau+1}^{\star}
             - \w_{\tau}^{\star} \rangle &\\
              f_{\tau+1}^{\star}(x) &= f_{\tau}^{\star}(x) + \langle \nabla_{\w_{0}} f_{0}^{\star}(x) ,\w_{\tau+1}^{\star}
             - \w_{\tau}^{\star} \rangle &\\
             f_{\tau+1}^{\star}(x) &= f_{\tau}^{\star}(x) +  \phi(x) \phi_{\tau+1}(X^{\tau+1})^{\top} (\kappa_{\tau+1}(X^{\tau+1},X^{\tau+1}) )^{-1}\tl{y}_{\tau+1}  &\\
             f_{\tau+1}^{\star}(x) &= f_{\tau}^{\star}(x) +  \phi(x) T_{\tau,:d}^{\top}\phi_{\tau+1}(X^{\tau+1})^{\top} (\kappa_{\tau+1}(X^{\tau+1},X^{\tau+1}) )^{-1}\tl{y}_{\tau+1}  &\\
               f_{\tau+1}^{\star}(x) &= f_{\tau}^{\star}(x) +  \underbrace{\phi(x) T_{\tau,:d}T_{\tau,:d}^{\top}\phi_{\tau+1}(X^{\tau+1})^{\top}}_{\kappa_{\tau+1}(x,X^{\tau+1})} (\kappa_{\tau+1}(X^{\tau+1},X^{\tau+1}) )^{-1}\tl{y}_{\tau+1}  \quad (\text{ since }   (T_{\tau,:d})^{\top}=T_{\tau,:d} )\\
               f_{\tau+1}^{\star}(x) &= f_{\tau}^{\star}(x) +  \kappa_{\tau+1}(x,X^{\tau+1})(\kappa_{\tau+1}(X^{\tau+1},X^{\tau+1}) )^{-1}\tl{y}_{\tau+1}  \\
                \intertext{We have proven $\cH_{t+1}$ and conclude the proof of Corollary \ref{cor:convergence_pca_finite_case}. }
        \end{align*}
\end{proof}

\subsection{Algorithms summary}

\begin{table}[h] 
    \centering
\begin{tabular}{c|c |c|c |c}
\hline\hline
\multirow{2}{*}{Properties}     & $O^{\tau_S \rightarrow \tau_T}_{X}$           &  $X$  & Elements  stored  & Recompute   \\ 
  & $=V_{\tau_S}^{\top}\mathbf{XX}^{\top}V_{\tau_T}$  &   contains                          & in the memory  &  NTK?  \\ \hline
SGD         &     X=$I_{\tau_S}$    & NA                                      & NA  & NA\\
GEM-NT        &    X=\textcolor{greencustom}{$\mathbf{\overline{G}_{\tau_T-1}}$}                       & samples of $\nabla\cL(X^{\tau})$  &  samples of $X^{\tau}$  & Yes \\ 
OGD          &  X=\textcolor{redcustom}{$\mathbf{R_{\tau_T-1}}$}                   & samples of $\nabla f(X^{\tau})$ &  samples of $\nabla f(X^{\tau})$   & No  \\ 
\algoname    &   X=\textcolor{bluecustom}{$\mathbf{R_{\tau_T-1,:d}}$}             &   top eigenvectors of $\nabla f(X^{\tau})$ &  top  eigenvectors of $\nabla f(X^{\tau})$ & No \\ \hline
\end{tabular}
    \caption{Property of the Overlap matrix for each method which is responsible for mitigating Catastrophic Forgetting. NA: Not applicable} \label{tab:summary_alg}
\end{table}

\paragraph[]{NTK overlap matrix}
First of all, the three methods GEM-NT, OGD, \algoname differs from SGD by the the matrix $X$ (1st column) that contains either the features map $\nabla_{\w}f(x)$ or the gradient loss function.

\paragraph[]{Elements stored}
GEM-NT and OGD both samples random elements at the end of each task $\tau$ to store in the memory. 
For the sake of understanding, if we assume a mean square loss function, with a batch size equal to one, the gradient loss function becomes:  
$g_{\tau}^{\text{(GEM-NT)}}=\underbrace{\nabla_{\w}f_\tau(x)}_{g_{\tau}^{(\text{OGD})}}(f_\tau(x)-y_\tau)$. From here, we see that GEM-NT weights the features maps by the residual of a given task $k < \tau$ when training on task $\tau $.

\paragraph[]{Information compression}
\algoname compresses the information contained in the data by storing the principal components of $\nabla_{\w}f(X^{\tau})$ through PCA. If the data has structure such as Rotated MNIST or Split CIFAR (See 
Section~\ref{fig:dataset_explained_variance}), storing few components will explain a high percentage of variance of the data of component in order to 
explain the dataset variance.

\paragraph[]{Accounting for the NTK variation}
The drawback OGD and \algoname compared to GEM-NT is that the NTK is assumed to be constant which is not always the case in practice (See Section~\ref{sec:ntk_variation}). \algoname and OGD will then project orthogonally to a vector that is outdated.
\newpage

\subsection{The counter-example of Permuted MNIST: no structure} \label{sec:permuted_mnist}

We now examine the dataset Permuted MNIST and try to understand why \algoname is not efficient in such case. Each task is an MNIST dataset where a different and uniform permutation of pixels is applied. This has the particularity of removing any extra-task correlations and patterns. 

\subparagraph{Eigenvalues of the NTK overlap matrix :}
Figure~\ref{fig:permuted_comparison_wrt_buffer_size2} shows the eigenvalues of the NTK overlap matrix when increasing buffer size. First of all, we notice that the magnitude of the eigenvalues is very small compared to Figure~\ref{fig:rotated_comparison_wrt_buffer_size}. This is explained by the fact that each task shares almost no correlations, meaning that the cosine of angle of the two subspaces might be close to 0 (small eigenvalues). Additionally, we see that increasing the memory size does not reduce much the eigenvalue magnitude. This is due to the distribution of eigenvalues (See Figure~\ref{fig:dataset_explained_variance}) which are spread more uniformly than Permuted MNIST and Split CIFAR, meaning that more components need to be kept in order to explain a high $\%$ of the variance. In this situation, \algoname does not have much advantage compared to OGD (See also toy example, Supplementary Material Section \ref{toy:worst_case_pca}).

\subparagraph{Final accuracy with OGD :}
We now compare final performance against OGD (See Figure~\ref{fig:permuted_bar2}). \algoname does sensitively well compared to OGD (except for the first task where performance are much worse). This can be explained by the fact that \algoname needs to keep a lot of components to explain a high percentage of variance such that selecting random element like OGD will results in comparable results. This is all the more confirmed by Table~\ref{table:dataset_explained_variance}, keeping $50$ components only explains $50\%$ of the variance while it respectively explains $81\%$ for Rotated MNIST and $72\%$ for Split CIFAR. As mentionned by \citep{farquhar2018towards}, even though such datasets meets the definition of CL, it is an irrealistic setting since 
 ``new situations look confusingly similar to old ones'\,''. Hence methods that leverage structure like \algoname can be useful.

\begin{figure}[h!]
\hspace{-2mm}
    \subfloat[
    Comparison of the eigenvalues of $O^{1 \rightarrow 2}$ on  \\
    \textbf{Permuted MNIST} with increasing memory size.  \\
    Lower values imply less forgetting.]{%
      \includegraphics[width=0.50\textwidth]{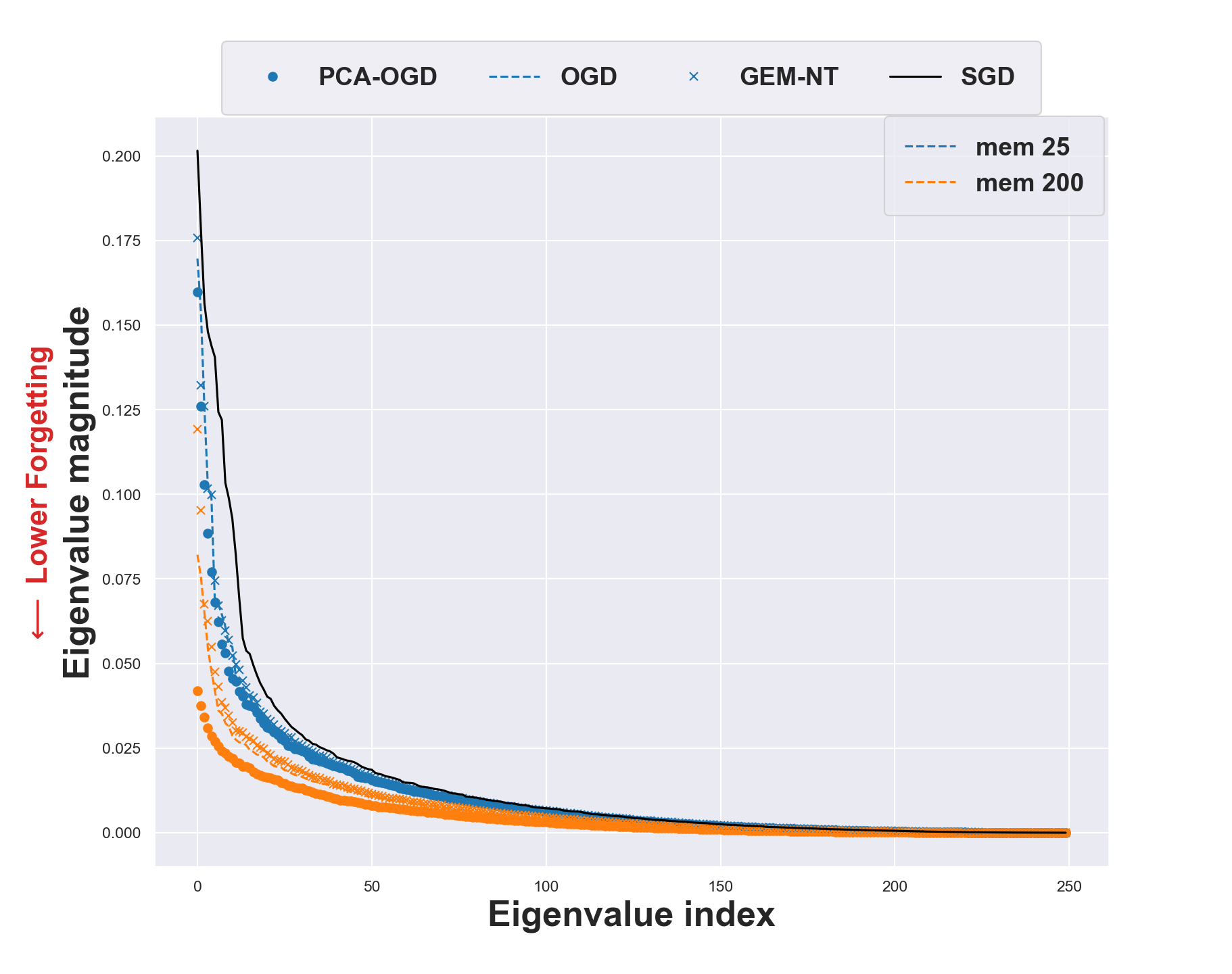}
      \label{fig:permuted_comparison_wrt_buffer_size2}
    }
\hspace{-2mm} 
    \subfloat[Final accuracy on \textbf{Permuted MNIST} for different memory size (averaged over $5$ seeds $\pm 1$ std). OGD and \algoname have comparable performance (except for the first task).]{%
      \includegraphics[width=0.50\textwidth,height=0.45\textwidth]{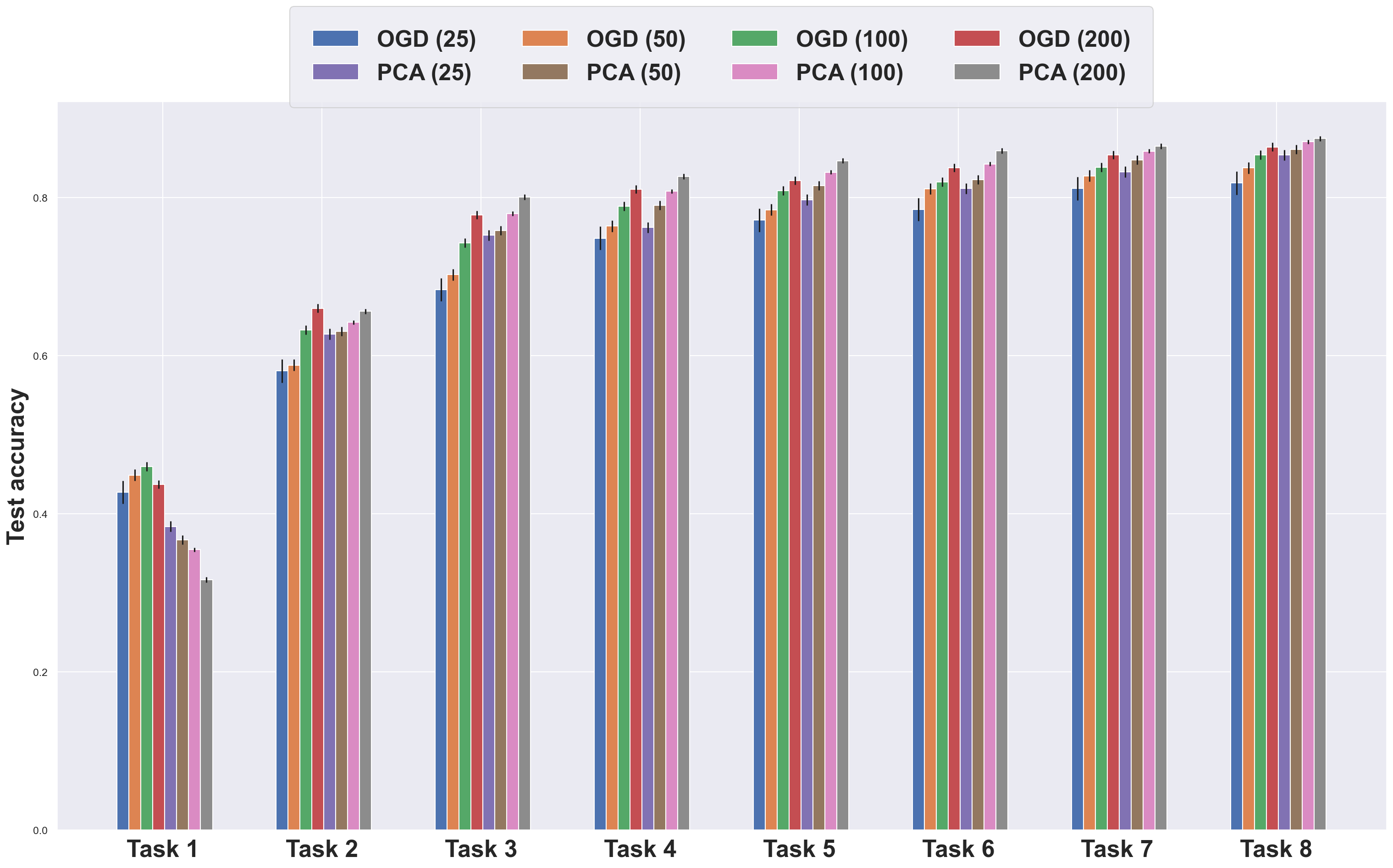}
      \label{fig:permuted_bar2}
    }
\end{figure}

\newpage

\subsection{Comparison \algoname versus OGD}

\begin{figure*}[h!]
    \centering
    \includegraphics[width=1\linewidth,height=0.45\linewidth]{figures/rotated_bar_comparison_ogd_pca.png}
    \centering
    \caption{Final accuracy on \textbf{Rotated MNIST} for different memory size (averaged over $5$ seeds $\pm 1$ std). OGD needs twice as much memory as \algoname in order to achieve the same performance (i.e compare OGD (200) and PCA (100).}
    \label{fig:rotated_bar}
\end{figure*}

\begin{figure*}[h!]
    \centering
    \includegraphics[width=1\linewidth,height=0.45\linewidth]{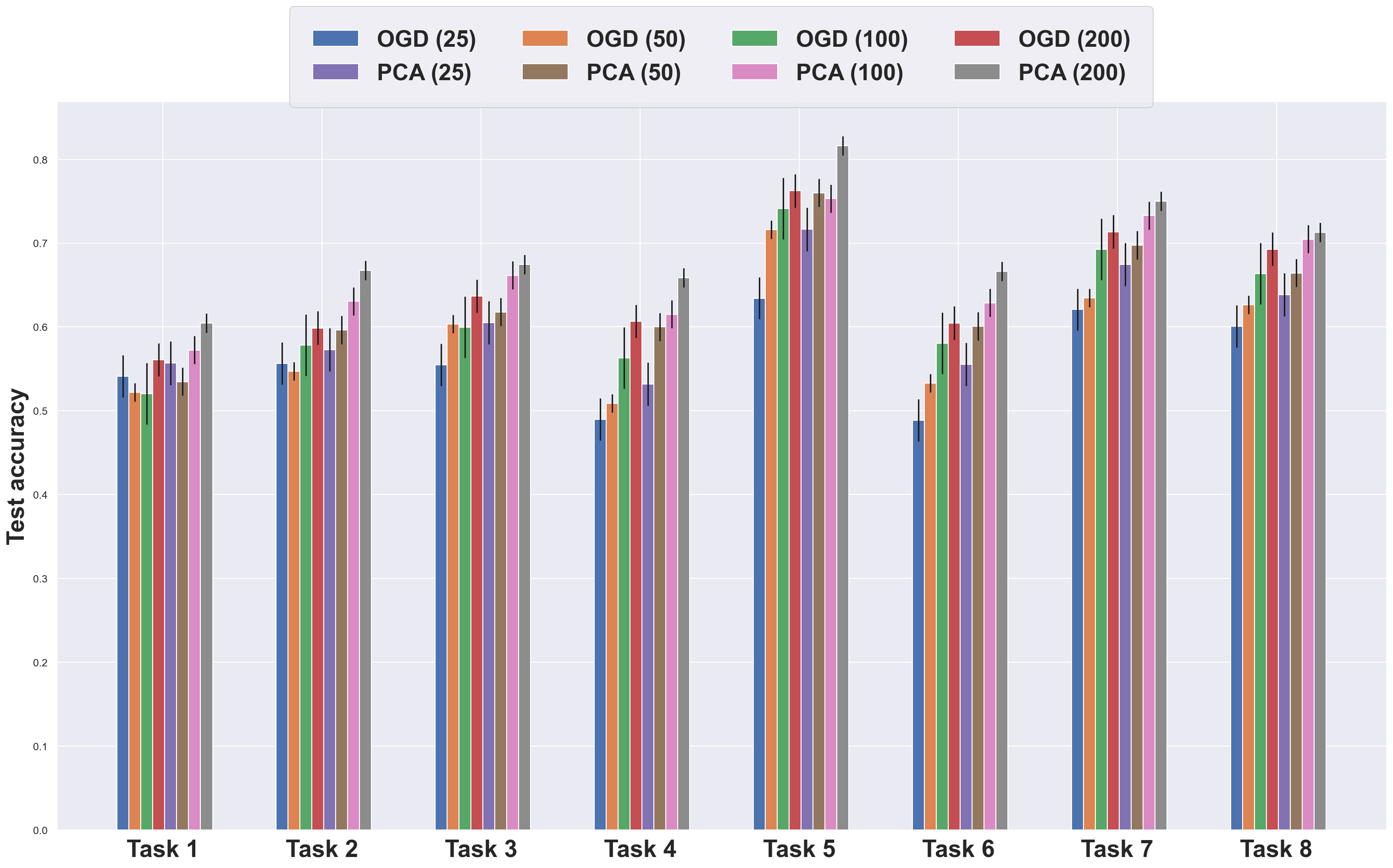}
    \centering
    \caption{Final accuracy on \textbf{Split CIFAR} for different memory size (averaged over $5$ seeds $\pm 1$ std). When dataset is well structured \algoname efficiently leverages the pattern (i.e compare OGD (200) and PCA (100).}
     \label{fig:split_cifar_bar}
\end{figure*}

\begin{figure*}[h!]
    \centering
    \includegraphics[width=1\linewidth,height=0.45\linewidth]{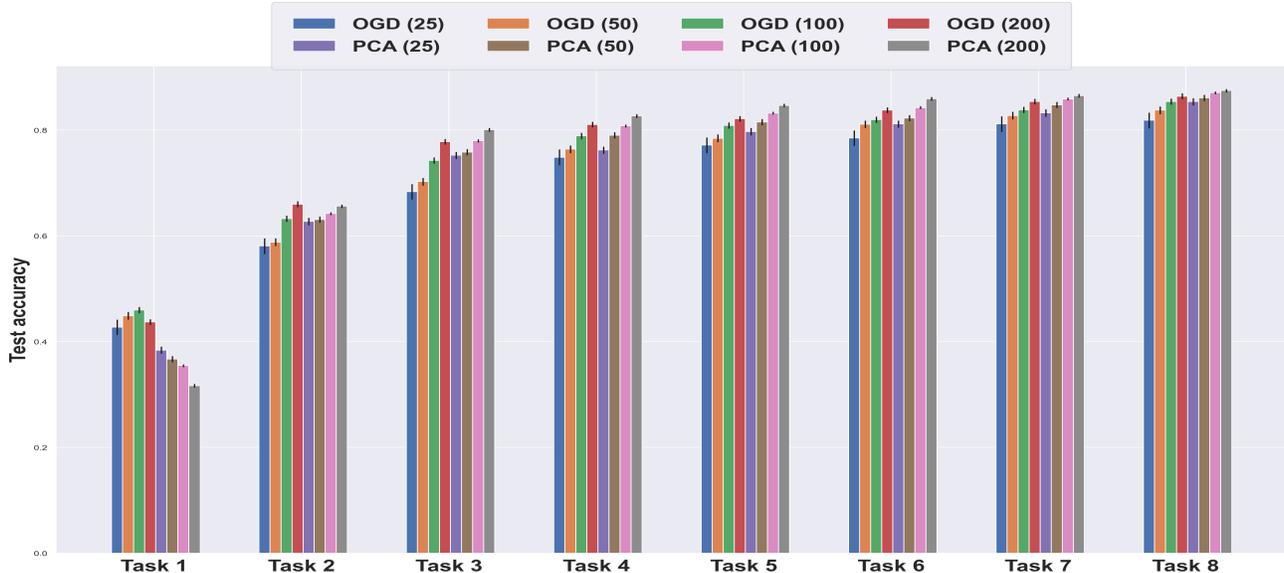}
    \centering
    \caption{Final accuracy on \textbf{Permuted MNIST} for different memory size. When there is no structure, information captured by \algoname from previous tasks cannot be leveraged for future tasks. OGD and \algoname have comparable performance (except for the first task).}
    \label{fig:permuted_bar}
\end{figure*}

\newpage

\subsection{Structure in the data}
We sample a subset of $s=3,000$ samples from different datasets $x^{\tau}_{j}$,$j=1,...,3000$ (Permuted and Rotated MNIST), then we perform PCA on $\phi(x^{\tau}_{j})\phi(x^{\tau}_{j})^{\top}$ and keep the $d$ top components. Having a total memory size of $M=200,500,1000,2000,3000$ and training on $15$ tasks means that each task will be allocated $M/14$ since we omit the last task. As seen in Figure \ref{fig:dataset_explained_variance} for a total memory size of $200$, we only keep $14$ components which corresponds to $38.84 \%$ of the variance explained in Permuted MNIST while it represents $71.06 \%$ for Rotated MNIST. This is naturally explained by the fact that having random permutations breaks the structure of the data and in order to keep the most information would we need to allocate a large amount of memory.

\begin{figure}[h!]
    \centering
    \includegraphics[width=1\linewidth]{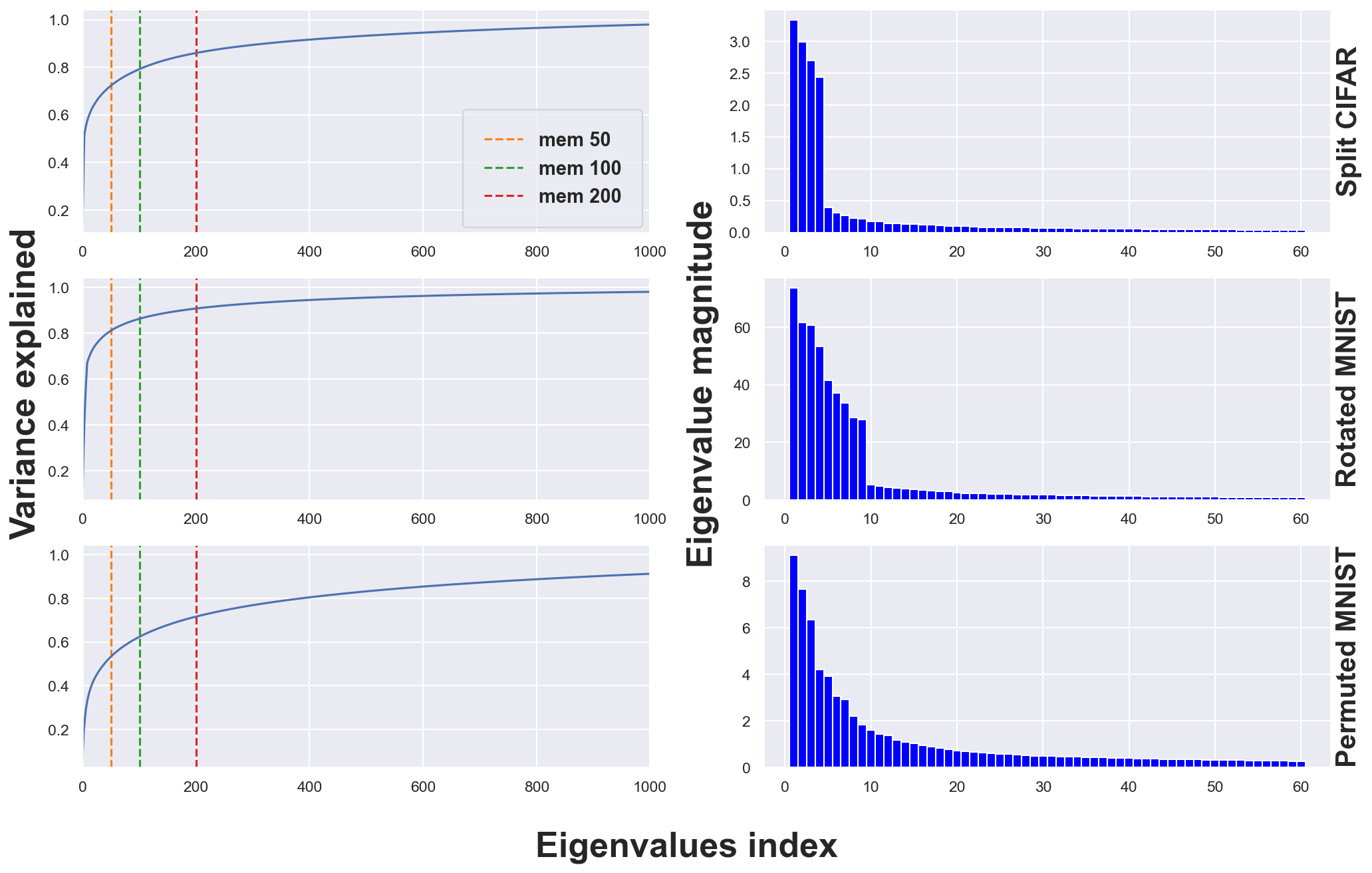}
    \caption{Percentage of variance explained for different datasets. Verticale lines on the left represents the number of components kept or the memory allocated per task. We have truncated the x-axis to focus on the interesting part.}
    \label{fig:dataset_explained_variance}
\end{figure}

\begin{table}[h!]
    \centering
\begin{tabular}{c|c c c}
\hline\hline
components kept     & Permuted MNIST & Rotated MNIST & Split CIFAR  \\ \hline
10         & 35.13  &  68.52 &  58.87\\
25         & 45.14  &  75.54 &  65.99\\
50         & 53.33  &  81.09 &  72.27\\
100             & 62.37 & 86.23  &  79.19  \\ 
200             & 71.65 & 90.71 &    85.99 \\
500            & 83.20 & 94.42   &  93.24 \\ \hline
\end{tabular}
    \caption{Percentage of variance explained with different memory size when performing PCA on $s=3,000$ samples (except for Split CIFAR where $s=1,500$).}
    \label{table:dataset_explained_variance}
\end{table}



\newpage

 \newpage
 
\subsection{NTK changes} \label{sec:ntk_variation}
 We measure the change in NTK of \algoname from its initialization value for different dataset size for a fixed architecture after each task (See Figure~\ref{fig:ntk_variation}). The green curve  shows the actual parameters used for the experiments. Although, there is linear increase of the NTK for MNIST datasets, it is approximately constant (after the first task) for Split CIFAR which validates the constant NTK assumption and explains the good result of \algoname  for this dataset.
\begin{figure}[h!]
     \centering
    \includegraphics[width=1.1\linewidth]{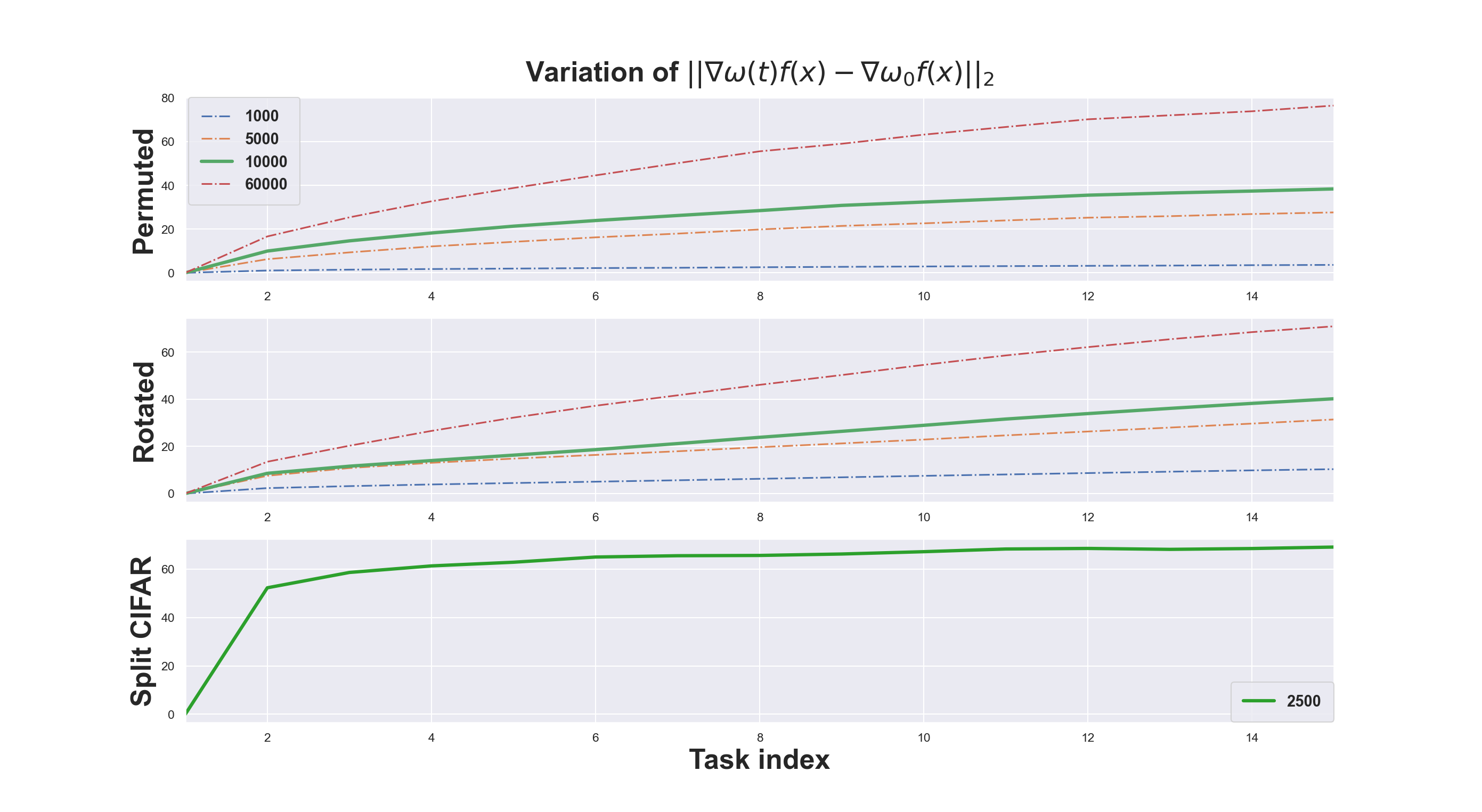}
    \caption{Variation of the NTK for different datasets size (legend).} \label{fig:ntk_variation}
\end{figure}

\newpage

\subsection{Experimental setup and general performance} \label{app:experiments_details}

\textbf{Datasets} We are considering four datasets \textbf{Permuted MNIST} \citep{ewc}, \textbf{Rotated MNIST} \citep{gem}, \textbf{Split MNIST} and \textbf{Split CIFAR} \citep{zenke2017continual}. For MNIST dataset, we sampled $1,000$ examples from each task leading to a total training set size of $10,000$ as in \citep{gem,aljundi2019online}.
\begin{itemize}
    \item Permuted MNIST is coming from the 0-9 digit dataset MNIST \citep{lecun1998gradient} where each pixels have been permuted randomly. Each task corresponds to a new permutation randomly generated (but fixed along all the dataset samples).
    \item Rotated MNIST is the same MNIST dataset where each new task corresponds to a fixed rotation of each digit by a fixed angle. Our $15$ tasks correspond to a fixed rotation of $5$ degres with respect to the previous task.
    \item Split MNIST consists of $5$ binary classification tasks where we split the digit such as: $0/1$ , $2/3$ , $4/5$ , $6/7$ , $8/9$.
    \item Split CIFAR comes from CIFAR-100 dataset \citep{krizhevsky2009learning} which contains $100$ classes that can be grouped again into $20$ superclasses. Split CIFAR-100 \citep{gem} is constructed by splitting the dataset into $20$ disjoincts classes sampled without replacement. The $20$ tasks are then composed of $5$ classes.
\end{itemize}

\textbf{Baselines}  We are comparing our method \algoname along with SGD, A-GEM \citep{agem} and OGD \citep{farajtabar2020orthogonal}. 

\textbf{Optimizer} We use Stochastic Gradient for each method and grid search to find hyperparameters that gave best results: learning rate of $1e-3$, batch size of $32$ and $10$ epochs for each tasks.

\textbf{Performance Metrics} Following \citep{agem} we report the Average Accuracy $A_{T}$ and the Forgetting Measure $F_{T}$:
\begin{align*}
    A_{T}=\frac{1}{T}\sum_{\tau=1}^{T}a_{T,\tau}
\end{align*}
Where $a_{\tau,T}$ represents the accuracy of task $\tau$ at the end of the training of task $T \geq \tau$.

\textbf{Forgetting Measure} \citep{gem} used the average forgetting as the performance drop of task $\tau$ over the training of later tasks:
\begin{align*}
    F_{T}=\frac{1}{T-1}\sum_{\tau=1}^{T-1}f_{\tau}^{T}
\end{align*}
where $f_{\tau}^{T}$ is defined as the highest forgetting from task $\tau$ until $T$:
\begin{align*}
    f_{\tau}^{T}=\max_{l=\tau,...,T} a_{l,\tau}-a_{T,\tau}
\end{align*}

\begin{table}[h!]
    \centering
\begin{tabular}{c|c |c| c}
\hline\hline
Hyperparameters    & Split MNIST & Rotated/Permuted MNIST & CIFAR-100  \\ \hline
Dataset size (per task)   & 2,000     & 10,000  &  2,500  \\
Epochs              & 5    & 10 &     50  \\ 
Architecture          & MLP  & MLP & LeNet$^1$      \\
Hidden dimension          & 100   & 100 & 200      \\
EWC regularizer &  10  & 10 & 25 \\
$\#$ tasks               & 5 & 15 & 20      \\ \hline
Optimizer      & \multicolumn{3}{c}{SGD}   \\ 
Learning rate      & \multicolumn{3}{c}{1e-03}   \\ 
Batch size        & \multicolumn{3}{c}{32}    \\
Torch seeds       & \multicolumn{3}{c}{0 to 4} \\ 
Memory size  & \multicolumn{3}{c}{100}    \\
PCA sample size $s$  & \multicolumn{3}{c}{3,000}    \\
samples  used to compute $O^{\tau_S \rightarrow \tau_T}$ (per task)   & \multicolumn{3}{c}{250}    \\\hline
\end{tabular}
    \caption{Hyperparameters used across experiments.
   }
\end{table}

\footnotetext[1]{ For this architecture, we observed better results when only projecting orthogonally to the fully connected layers.}

\newpage 

\begin{table}[h!]
    \centering
\resizebox{1.0\columnwidth}{!}{
\begin{tabular}{c |  c  c  c  c  c  c  c  } 
\hline\hline
 &  \textbf{Task 1}  &  \textbf{Task 2}  &  \textbf{Task 3}  &  \textbf{Task 4}  &  \textbf{Task 5}  &  \textbf{Task 6}  &  \textbf{Task 7}   \\ \hline
 SGD & 25.55 $\pm$ 0.99 & 27.79 $\pm$ 0.85 & 33.39 $\pm$ 1.08 & 40.97 $\pm$ 0.92 & 49.05 $\pm$ 1.07 & 56.05 $\pm$ 0.95 & 64.48 $\pm$ 0.83  \\
EWC & 49.16 $\pm$ 1.54 & 52.58 $\pm$ 1.85 & 61.0 $\pm$ 1.81 & 67.77 $\pm$ 1.55 & 72.85 $\pm$ 1.18 & 76.8 $\pm$ 1.1 & 80.05 $\pm$ 0.76  \\
AGEM & \textbf{65.48 $\pm$ 1.38} & \textbf{65.93 $\pm$ 1.15} & \textbf{72.95 $\pm$ 0.65} & \textbf{77.23 $\pm$ 0.55} & \textbf{79.38 $\pm$ 0.32} & 81.9 $\pm$ 0.29 & 84.09 $\pm$ 0.25  \\ 
OGD & 44.16 $\pm$ 1.52 & 47.06 $\pm$ 1.26 & 55.75 $\pm$ 0.96 & 63.53 $\pm$ 1.37 & 69.75 $\pm$ 0.36 & 75.67 $\pm$ 0.44 & 80.86 $\pm$ 0.19  \\ 
\algoname & 52.51 $\pm$ 2.3 & 55.81 $\pm$ 1.79 & 65.21 $\pm$ 1.76 & 72.79 $\pm$ 1.34 & 79.0 $\pm$ 0.8 & \textbf{83.34 $\pm$ 0.57} & \textbf{86.65 $\pm$ 0.31}  \\

 \hline \end{tabular}
}

 \bigskip
 
\resizebox{1.0\columnwidth}{!}{
\begin{tabular}{c |  c  c  c  c  c  c  c  c  } 
\hline\hline
 &  \textbf{Task 8}  &  \textbf{Task 9}  &  \textbf{Task 10}  &  \textbf{Task 11}  &  \textbf{Task 12}  &  \textbf{Task 13}  &  \textbf{Task 14}  &  \textbf{Task 15}   \\ \hline
 SGD & 72.75 $\pm$ 0.6 & 78.55 $\pm$ 0.43 & 84.4 $\pm$ 0.26 & 88.45 $\pm$ 0.27 & 91.08 $\pm$ 0.13 & 92.48 $\pm$ 0.09 & 93.28 $\pm$ 0.19 & \textbf{92.74 $\pm$ 0.15}  \\ 
EWC & 82.71 $\pm$ 0.65 & 84.45 $\pm$ 0.37 & 86.32 $\pm$ 0.24 & 87.34 $\pm$ 0.31 & 87.47 $\pm$ 0.37 & 86.9 $\pm$ 0.38 & 85.23 $\pm$ 0.58 & 82.42 $\pm$ 0.65  \\
AGEM & 85.84 $\pm$ 0.12 & 87.47 $\pm$ 0.16 & 89.6 $\pm$ 0.18 & 91.28 $\pm$ 0.09 & 92.54 $\pm$ 0.18 & 93.13 $\pm$ 0.09 & \textbf{93.33 $\pm$ 0.11} & 92.71 $\pm$ 0.12  \\ 
OGD & 84.75 $\pm$ 0.41 & 87.39 $\pm$ 0.38 & 90.09 $\pm$ 0.5 & 91.7 $\pm$ 0.3 & 92.84 $\pm$ 0.28 & 93.03 $\pm$ 0.23 & 92.9 $\pm$ 0.22 & 91.86 $\pm$ 0.19  \\ 
\algoname & \textbf{89.08 $\pm$ 0.1} & \textbf{90.62 $\pm$ 0.2} & \textbf{92.02 $\pm$ 0.3} & \textbf{92.87 $\pm$ 0.09} & \textbf{93.52 $\pm$ 0.17} & \textbf{93.18 $\pm$ 0.12} & 92.85 $\pm$ 0.14 & 91.3 $\pm$ 0.15  \\

 \hline \end{tabular}

}
    \caption{Final Accuracy for Rotated MNIST (averaged over $5$ seeds $\pm 1$ std).}
\end{table}

\begin{table}[h!]
    \centering
\resizebox{1.0\columnwidth}{!}{
\begin{tabular}{c |  c  c  c  c  c  c  c  } 
\hline\hline
 &  \textbf{Task 1}  &  \textbf{Task 2}  &  \textbf{Task 3}  &  \textbf{Task 4}  &  \textbf{Task 5}  &  \textbf{Task 6}  &  \textbf{Task 7}   \\ \hline
SGD & 56.88 $\pm$ 2.85 & 62.18 $\pm$ 6.06 & 65.96 $\pm$ 2.87 & 66.62 $\pm$ 10.57 & 71.74 $\pm$ 3.0 & 70.86 $\pm$ 5.42 & 77.27 $\pm$ 1.69  \\ 
EWC & \textbf{77.5 $\pm$ 2.13} & \textbf{79.78 $\pm$ 1.63} & \textbf{81.91 $\pm$ 0.94} & \textbf{81.92 $\pm$ 0.63} & 81.26 $\pm$ 1.02 & 80.88 $\pm$ 0.89 & 80.91 $\pm$ 1.1  \\ 
AGEM & 75.34 $\pm$ 1.92 & 75.16 $\pm$ 1.37 & 79.41 $\pm$ 0.9 & 79.78 $\pm$ 3.22 & 79.87 $\pm$ 1.65 & 81.16 $\pm$ 1.88 & 82.48 $\pm$ 1.28  \\ 
OGD & 45.97 $\pm$ 3.6 & 63.25 $\pm$ 3.43 & 74.25 $\pm$ 2.8 & 78.9 $\pm$ 3.15 & 80.9 $\pm$ 1.42 & 81.99 $\pm$ 1.27 & 83.86 $\pm$ 0.61  \\ 

\algoname & 35.47 $\pm$ 3.34 & 64.23 $\pm$ 2.05 & 77.98 $\pm$ 1.59 & 80.82 $\pm$ 1.98 & \textbf{83.21 $\pm$ 1.09} & \textbf{84.25 $\pm$ 1.39} & \textbf{85.89 $\pm$ 0.45}  \\ 
 \hline \end{tabular}
 }

 \bigskip
 
\resizebox{1.0\columnwidth}{!}{
\begin{tabular}{c |  c  c  c  c  c  c  c  c  } 
\hline\hline
 &  \textbf{Task 8}  &  \textbf{Task 9}  &  \textbf{Task 10}  &  \textbf{Task 11}  &  \textbf{Task 12}  &  \textbf{Task 13}  &  \textbf{Task 14}  &  \textbf{Task 15}   \\ \hline
SGD & 75.14 $\pm$ 2.34 & 81.54 $\pm$ 2.61 & 83.31 $\pm$ 1.17 & 85.27 $\pm$ 1.71 & 86.48 $\pm$ 0.49 & 88.34 $\pm$ 0.29 & 89.73 $\pm$ 0.48 & \textbf{90.85 $\pm$ 0.16}  \\ 
EWC & 80.98 $\pm$ 0.94 & 80.4 $\pm$ 1.33 & 80.2 $\pm$ 1.18 & 79.77 $\pm$ 1.27 & 78.6 $\pm$ 0.91 & 77.92 $\pm$ 0.8 & 77.3 $\pm$ 0.58 & 76.28 $\pm$ 0.67  \\ 
AGEM & 82.81 $\pm$ 1.09 & 85.86 $\pm$ 0.67 & 85.56 $\pm$ 0.85 & 86.44 $\pm$ 1.22 & 87.65 $\pm$ 0.81 & 88.81 $\pm$ 0.34 & 89.91 $\pm$ 0.25 & 90.79 $\pm$ 0.21  \\ 

OGD & 85.42 $\pm$ 0.59 & 86.69 $\pm$ 0.5 & 86.84 $\pm$ 0.62 & 87.86 $\pm$ 0.65 & 88.68 $\pm$ 0.36 & 89.28 $\pm$ 0.31 & 89.88 $\pm$ 0.23 & 90.45 $\pm$ 0.16  \\ 
\algoname & \textbf{87.08 $\pm$ 0.26} & \textbf{87.46 $\pm$ 0.77} & \textbf{87.9 $\pm$ 0.5} & \textbf{88.34 $\pm$ 0.44} & \textbf{89.16 $\pm$ 0.39} & \textbf{89.65 $\pm$ 0.14} & \textbf{89.93 $\pm$ 0.17} & 90.29 $\pm$ 0.2  \\ 
 \hline \end{tabular}
}
    \caption{Final Accuracy for Permuted MNIST  (averaged over $5$ seeds $\pm 1$ std).}
\end{table}

\begin{table}[h!]
    \centering
\resizebox{1.0\columnwidth}{!}{
\begin{tabular}{c |  c  c  c  c  c  c  c  c  c  c  } 
\hline\hline
 &  \textbf{Task 1}  &  \textbf{Task 2}  &  \textbf{Task 3}  &  \textbf{Task 4}  &  \textbf{Task 5}  &  \textbf{Task 6}  &  \textbf{Task 7}  &  \textbf{Task 8}  &  \textbf{Task 9}  &  \textbf{Task 10}   \\ \hline

SGD & 48.92 $\pm$ 5.11 & 40.96 $\pm$ 2.7 & 46.52 $\pm$ 6.45 & 40.2 $\pm$ 7.27 & 56.36 $\pm$ 8.12 & 44.76 $\pm$ 6.08 & 54.32 $\pm$ 4.9 & 44.52 $\pm$ 5.41 & 46.24 $\pm$ 6.29 & 59.88 $\pm$ 7.56  \\
EWC & \textbf{63.28 $\pm$ 2.44} & 62.32 $\pm$ 7.5 & 57.36 $\pm$ 5.88 & 56.0 $\pm$ 6.39 & 73.56 $\pm$ 6.42 & 52.32 $\pm$ 5.96 & 64.92 $\pm$ 3.03 & 62.44 $\pm$ 2.89 & 53.04 $\pm$ 6.24 & 70.48 $\pm$ 5.09  \\ 
AGEM & 38.88 $\pm$ 2.81 & 37.0 $\pm$ 5.34 & 37.28 $\pm$ 5.62 & 32.44 $\pm$ 8.53 & 42.4 $\pm$ 12.14 & 36.72 $\pm$ 4.66 & 41.12 $\pm$ 5.16 & 39.36 $\pm$ 4.22 & 41.92 $\pm$ 6.3 & 47.08 $\pm$ 8.52  \\ 
OGD & 52.04 $\pm$ 2.92 & 57.84 $\pm$ 5.83 & 59.96 $\pm$ 3.77 & 56.32 $\pm$ 1.47 & 74.12 $\pm$ 2.25 & 58.04 $\pm$ 2.75 & 69.24 $\pm$ 1.18 & 66.36 $\pm$ 3.66 & 60.84 $\pm$ 2.17 & 77.84 $\pm$ 2.2  \\ 
\algoname & 57.24 $\pm$ 2.55 & \textbf{63.08 $\pm$ 4.96} & \textbf{66.16 $\pm$ 0.83} & \textbf{61.52 $\pm$ 1.09} & \textbf{75.32 $\pm$ 6.88} & \textbf{62.88 $\pm$ 1.45} & \textbf{73.28 $\pm$ 1.06} & \textbf{70.48 $\pm$ 1.67} & \textbf{66.32 $\pm$ 1.14} & \textbf{80.28 $\pm$ 1.22}  \\

 \hline \end{tabular}
 }
 
 \bigskip

\resizebox{1.0\columnwidth}{!}{
\begin{tabular}{c |  c  c  c  c  c  c  c  c  c  c  } 
\hline\hline
 &  \textbf{Task 11}  &  \textbf{Task 12}  &  \textbf{Task 13}  &  \textbf{Task 14}  &  \textbf{Task 15}  &  \textbf{Task 16}  &  \textbf{Task 17}  &  \textbf{Task 18}  &  \textbf{Task 19}  &  \textbf{Task 20}   \\ \hline

SGD & 67.56 $\pm$ 4.21 & 49.64 $\pm$ 8.46 & 66.4 $\pm$ 4.6 & 58.68 $\pm$ 4.78 & 65.68 $\pm$ 5.11 & 54.8 $\pm$ 13.41 & 63.6 $\pm$ 3.84 & 57.84 $\pm$ 4.5 & 75.16 $\pm$ 2.54 & 80.12 $\pm$ 2.41  \\ 
EWC & 77.92 $\pm$ 2.36 & 67.4 $\pm$ 1.51 & 74.44 $\pm$ 1.83 & 68.28 $\pm$ 3.08 & 72.64 $\pm$ 1.87 & 67.88 $\pm$ 3.73 & 67.08 $\pm$ 3.67 & 53.08 $\pm$ 5.02 & 73.32 $\pm$ 1.18 & 71.32 $\pm$ 6.05  \\ 
AGEM & 54.68 $\pm$ 7.95 & 39.48 $\pm$ 11.16 & 52.52 $\pm$ 12.46 & 50.36 $\pm$ 7.41 & 55.4 $\pm$ 12.14 & 55.16 $\pm$ 1.8 & 54.92 $\pm$ 9.64 & 50.28 $\pm$ 10.96 & 65.56 $\pm$ 5.36 & 78.52 $\pm$ 1.01  \\ 
OGD & 80.44 $\pm$ 1.93 & 67.8 $\pm$ 2.99 & 80.4 $\pm$ 1.04 & 74.56 $\pm$ 0.8 & 77.92 $\pm$ 1.93 & 72.36 $\pm$ 0.82 & 75.0 $\pm$ 0.83 & \textbf{73.0 $\pm$ 1.48} & 79.52 $\pm$ 1.39 & \textbf{81.88 $\pm$ 1.73}  \\ 
\algoname & \textbf{82.56 $\pm$ 0.74} & \textbf{71.84 $\pm$ 1.81} & \textbf{81.88 $\pm$ 1.52} & \textbf{76.08 $\pm$ 1.51} & \textbf{80.4 $\pm$ 0.95} & \textbf{73.52 $\pm$ 1.39} & \textbf{76.52 $\pm$ 0.53} & \textbf{73.0 $\pm$ 1.63} & \textbf{80.36 $\pm$ 1.82} & 81.28 $\pm$ 0.72  \\ 
 \hline \end{tabular}
}
    \caption{Final Accuracy for Split CIFAR (averaged over $5$ seeds $\pm 1$ std).}
\end{table}

\begin{table}[h!]
    \centering
\resizebox{1.0\columnwidth}{!}{
\begin{tabular}{c |  c  c  c  c  c  } 
\hline\hline
 &  \textbf{Task 1}  &  \textbf{Task 2}  &  \textbf{Task 3}  &  \textbf{Task 4}  &  \textbf{Task 5}   \\ \hline

SGD & 99.35 $\pm$ 0.2 & 88.62 $\pm$ 5.21 & 94.85 $\pm$ 1.69 & 98.1 $\pm$ 0.38 & \textbf{94.6 $\pm$ 0.4}  \\ 
EWC & 99.34 $\pm$ 0.19 & 88.36 $\pm$ 5.38 & 94.96 $\pm$ 1.37 & 98.09 $\pm$ 0.39 & 94.56 $\pm$ 0.49  \\ 
AGEM & 99.5 $\pm$ 0.22 & 85.92 $\pm$ 7.36 & 93.31 $\pm$ 2.13 & 98.07 $\pm$ 0.42 & 94.44 $\pm$ 0.82  \\ 
OGD & 99.64 $\pm$ 0.09 & \textbf{92.24 $\pm$ 1.49} & \textbf{95.75 $\pm$ 0.4} & 98.22 $\pm$ 0.39 & 94.41 $\pm$ 0.4  \\ 
\algoname & \textbf{99.67 $\pm$ 0.08} & 92.22 $\pm$ 1.67 & 95.37 $\pm$ 0.72 & \textbf{98.39 $\pm$ 0.28} & 94.14 $\pm$ 0.42  \\ 
 \hline \end{tabular}
 }
    \caption{Final Accuracy for Split MNIST (averaged over $5$ seeds $\pm 1$ std).}
\end{table}

\newpage

\clearpage

\subsection{Worst-case scenario for \algoname : data spread uniformly along all directions}
\label{toy:worst_case_pca}

In this section, we present a toy example which highlights the drawbacks of \algoname against Catastrophic
Forgetting in comparison with OGD, in the special case where magnitude of eigenvalues are spread out.




\paragraph[]{Experiments}

In this section, we build a worst case scneario where datapoints $\{X^{\tau}\}_{\tau=1}^{T}$ are spread uniformly across all directions. We consider a regression task with a linear model $f_{\tau}(X^{\tau})=(X^{\tau})^{\top}({\w_{\tau}(t)-\w_{\tau-1}^{\star}})$ where $X^{\tau} \in \mathbb{R}^{n_{\tau} \times p}$, $\w \in \mathbb{R}^{p}$, $\tau \in [T]$. We generate the data as follows for all $\tau \in [T]$:
\begin{align*}
X^{\tau} \sim \cN(\mu_{x_{\tau},\sigma_{x_{\tau}}}) \\
\w_{\tau}^{\star} \sim \cN(\mu_{\w_{\tau}},\sigma_{\w_{\tau}}) \\
y_{\tau}=(X^{\tau})^{\top}\w_{\tau}^{\star}+\epsilon_{\tau} \\
\epsilon_{\tau} \sim \cN(0,\sigma_{\epsilon_{\tau}})
\end{align*}
We are considering Mean Square Error (MSE) for the loss function:
$ \cL_{\tau}=\frac{1}{n_{\tau}}\displaystyle{\sum_{i=1}^{n_{\tau}}}(y^{\tau}_{i}-f_{\tau}(x^{\tau}_{i}))^{2},\hspace{1em}\forall \tau \in [T]$. 

Note in this setting, the kernel is simply the which is simply the gradient kernel matrix of the dataset :
\begin{align*}
\phi(X^{\tau})\phi(X^{\tau})^{\top}=\nabla_{\w}f_{\tau}(X^{\tau})\nabla_{\w}f_{\tau}(X^{\tau})^{\top}=X^{\tau}(X^{\tau})^{\top}  \in \mathbb{R}^{n_{\tau} \times n_{\tau}}   
\end{align*} 
As shown below in Figure \ref{fig:eigenvalues_worst_case} the eigenvalues of the PCA decomposition of $X^{\tau}(X^{\tau})^{\top}$ are of the same magnitude order and taking the first $25$ components only represents $26 \%$ of the explained variance. We trained the model on $15$ tasks with a total memory of $25$ per tasks. We only show below the testing error and forgetting error of the first $9$ tasks. 
As expected, PCA-OGD incurs drastic variation of its loss function while OGD shows practically no forgetting.

\begin{figure}[h!]
\hspace{+0mm}
    \subfloat[
    Eigenvalues structure of the dataset. The first \\
    eigenvalues are sensitively of the same magnitude  \\
    order (left)
    such that taking the first $25 (5\%)$   \\ only explains $26 \%$
    of the data variance (right).]{%
      \includegraphics[width=0.50\textwidth]{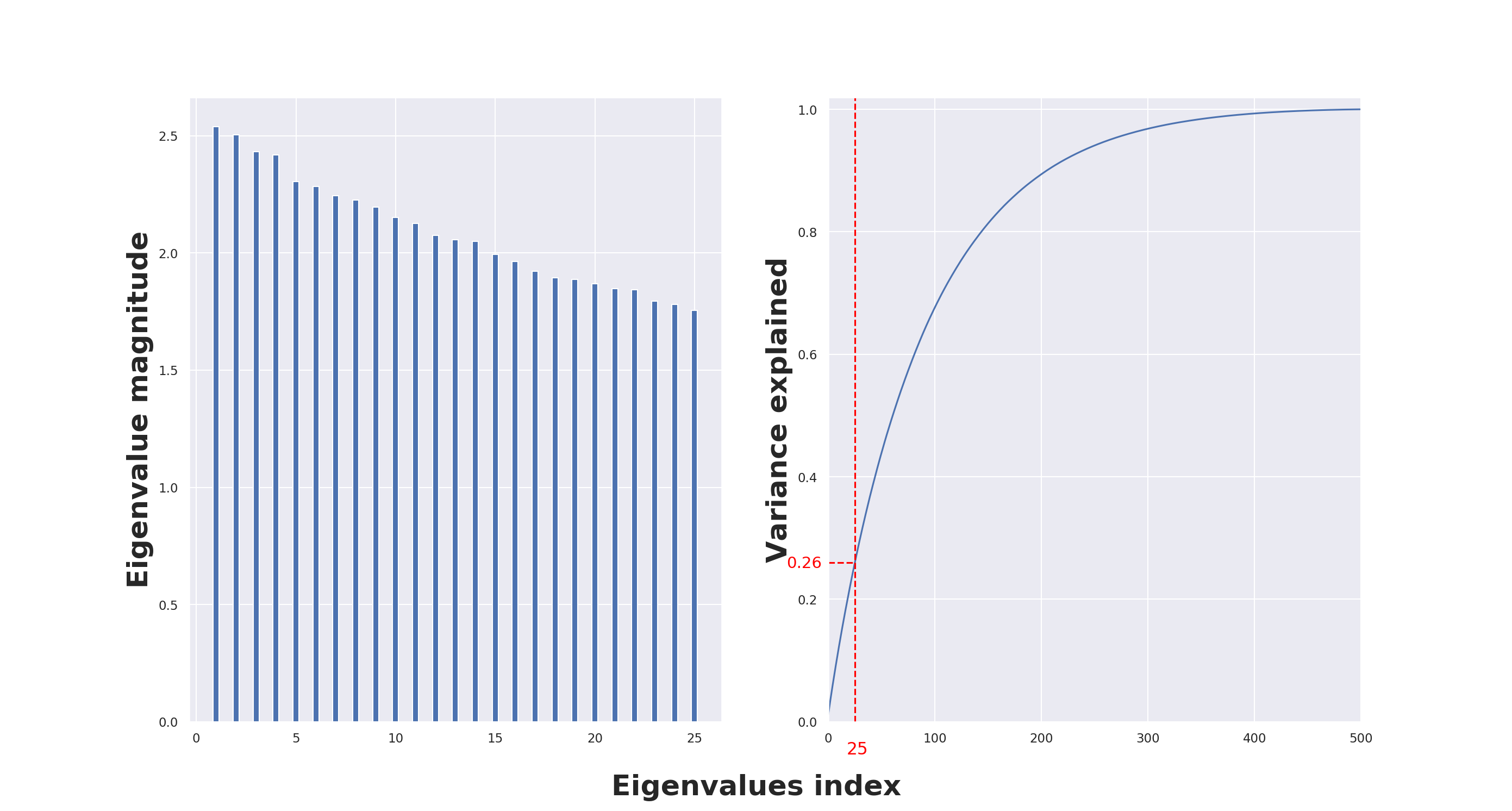}
      \label{fig:eigenvalues_worst_case}
    }
    \hspace{+0mm}
    \subfloat[Testing loss of OGD (left) versus  \\ PCA-OGD (right). 
    OGD incurs almost no forgetting  \\ while PCA-OGD has 
    drastic variation  
    in the testing \\ loss over the time.]{%
      \includegraphics[width=0.50\textwidth]{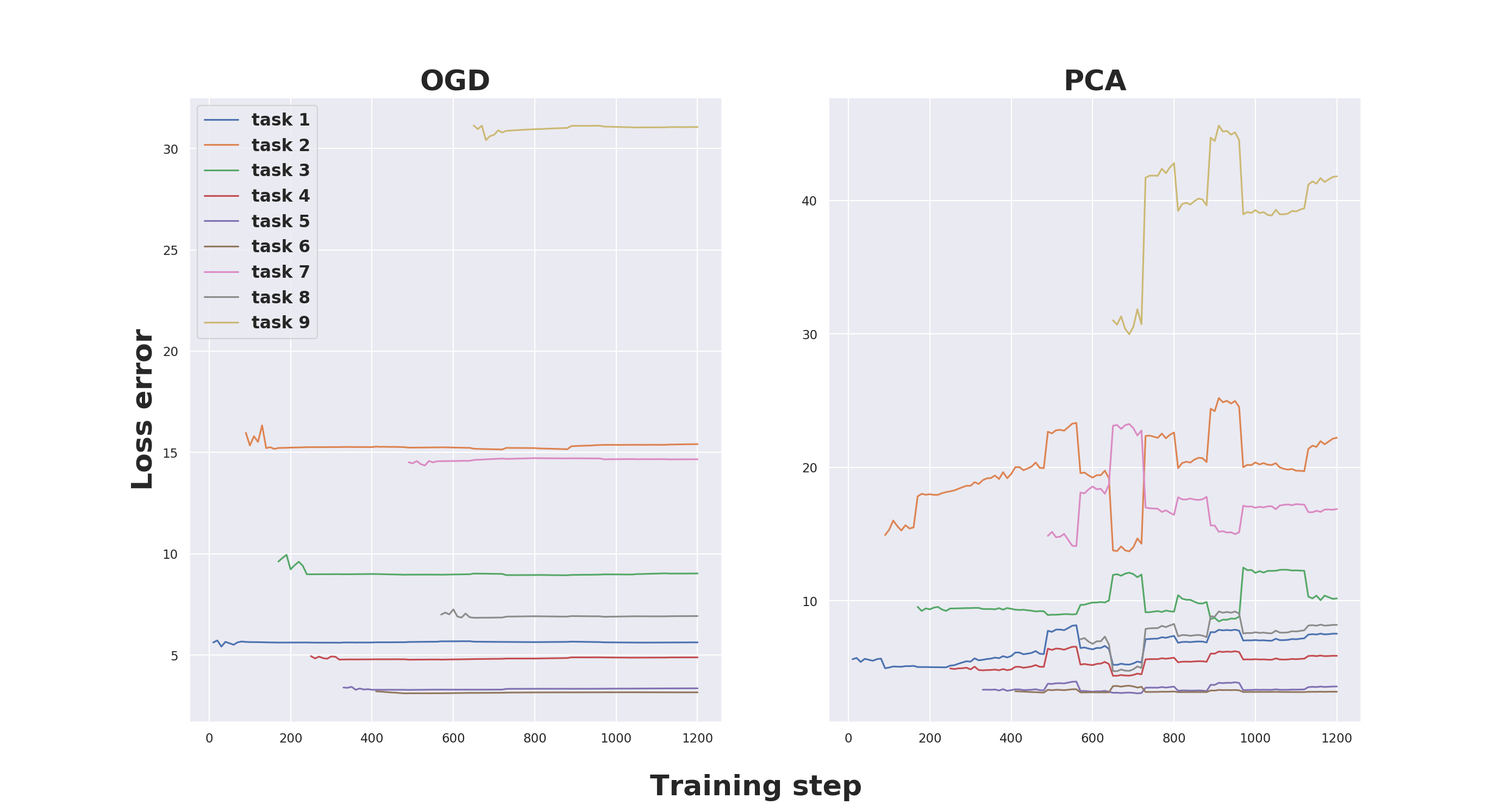}
    }
\end{figure}
\newpage
\subsection{Pseudo-code for GEM-NT}

\begin{algorithm}[h!]
    \SetKwInOut{Input}{Input}
    \SetKwInOut{Output}{Output}
    \SetKw{KwBy}{by}
    \SetAlgoLined
    \Input{A task sequence $\cT_1, \cT_2, \ldots  $, learning rate $\eta$,  components to keep $d$}
    \begin{enumerate}
        \item Initialize $S_{1}\leftarrow \{\}$ ; $\w \leftarrow \w_0$
        \item \For{Task ID $\tau=1,2,3, \ldots $}{
        \Repeat{convergence}{
        
        $\vg \leftarrow$ Stochastic Batch Gradient for $\cT_\tau$ at $\w$\;
        \tcp{Orthogonal updates}
       $\tilde{\vg} = \vg - \sum_{(x_{k},y_k) \in \cS_k, {\color{red}k=1,..,\tau-1}} \nabla \cL^{\tau}_{\lambda}(x_{k};y_k))$\;
        $\w \leftarrow \w - \eta \tilde{\vg}$
        }
    \tcp{Compute loss gradient}
        {\color{red} Sample $d$ elements $(x_{\tau},y_{\tau})$ from $\cT_{\tau}$}  \
        
        
        {\color{red} $S_{\tau}\leftarrow \{ (x_{\tau},y_{\tau}) \}$}  \
        
        }
    \end{enumerate}
    \caption{GEM-NT for Continual Learning}
    \label{alg:gem-nt}
\end{algorithm}

\subsection{Computational overhead of PCA-OGD}
The only additional step of PCA-OGD over OGD is the \underline{PCA operation}. The complexity of the Graham Schmidt (GS) step is $O(M^{2}p)$ where $M$ is the memory size and $p$ the number of parameters. The PCA operation has complexity  $O(n^{2}p)$  where $n \ll p$ are the first $n$ components to keep. However, $n=M/T$ with $T$ being the number of tasks hence there is no computational overhead for the PCA step: PCA-OGD and OGD have the same asymptotic complexity.

\newpage

\subsection{Eigenvalues evolution of the NTK overlap matrix between the source and target task} \label{app:forgetting_angle}

\begin{figure}[h!]
     \centering
    \includegraphics[width=1.1\linewidth]{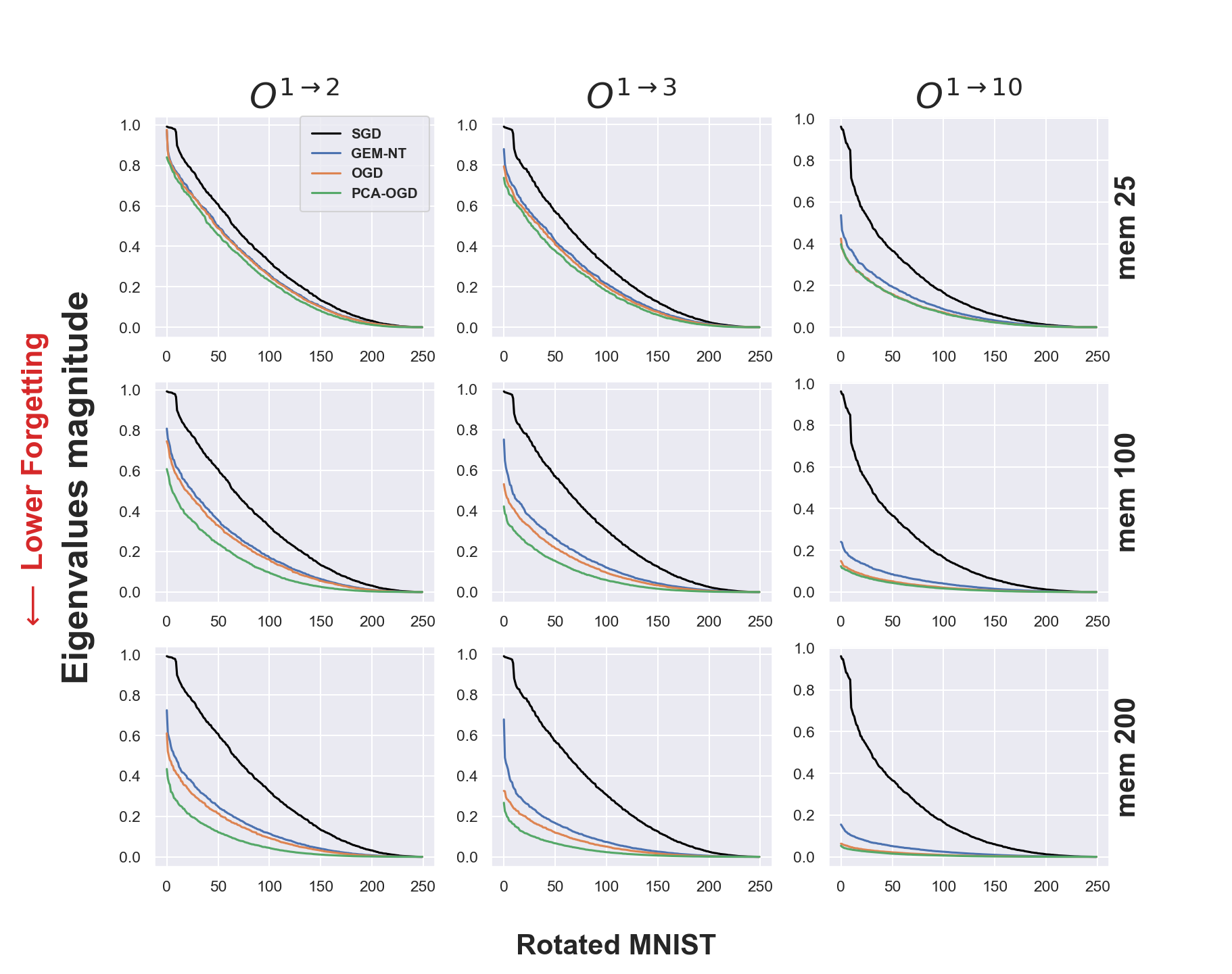}
    \caption{Eigenvalues of the overlap matrix $O^{\tau_S \rightarrow \tau_T}$ for different memozy size and methods. Increasing memory gives better advantage to \algoname.}
\end{figure}

\begin{figure}[h!]
     \centering
    \includegraphics[width=1.1\linewidth]{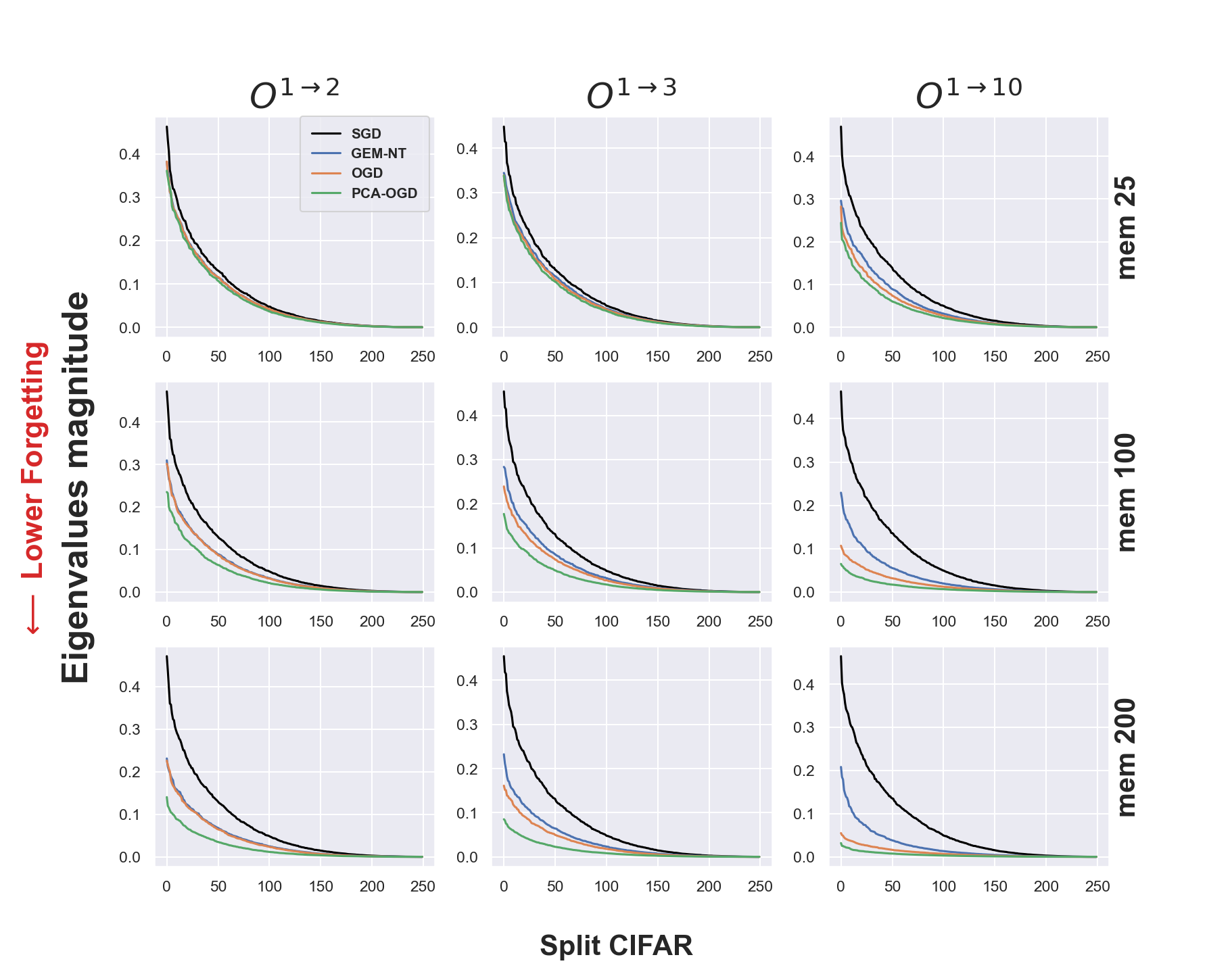} 
    \caption{Eigenvalues of the overlap matrix $O^{\tau_S \rightarrow \tau_T}$ for different memozy size and methods. Increasing memory gives better advantage to \algoname.}
\end{figure}

\begin{figure}[h!]
    \centering
    \includegraphics[width=1.1\linewidth]{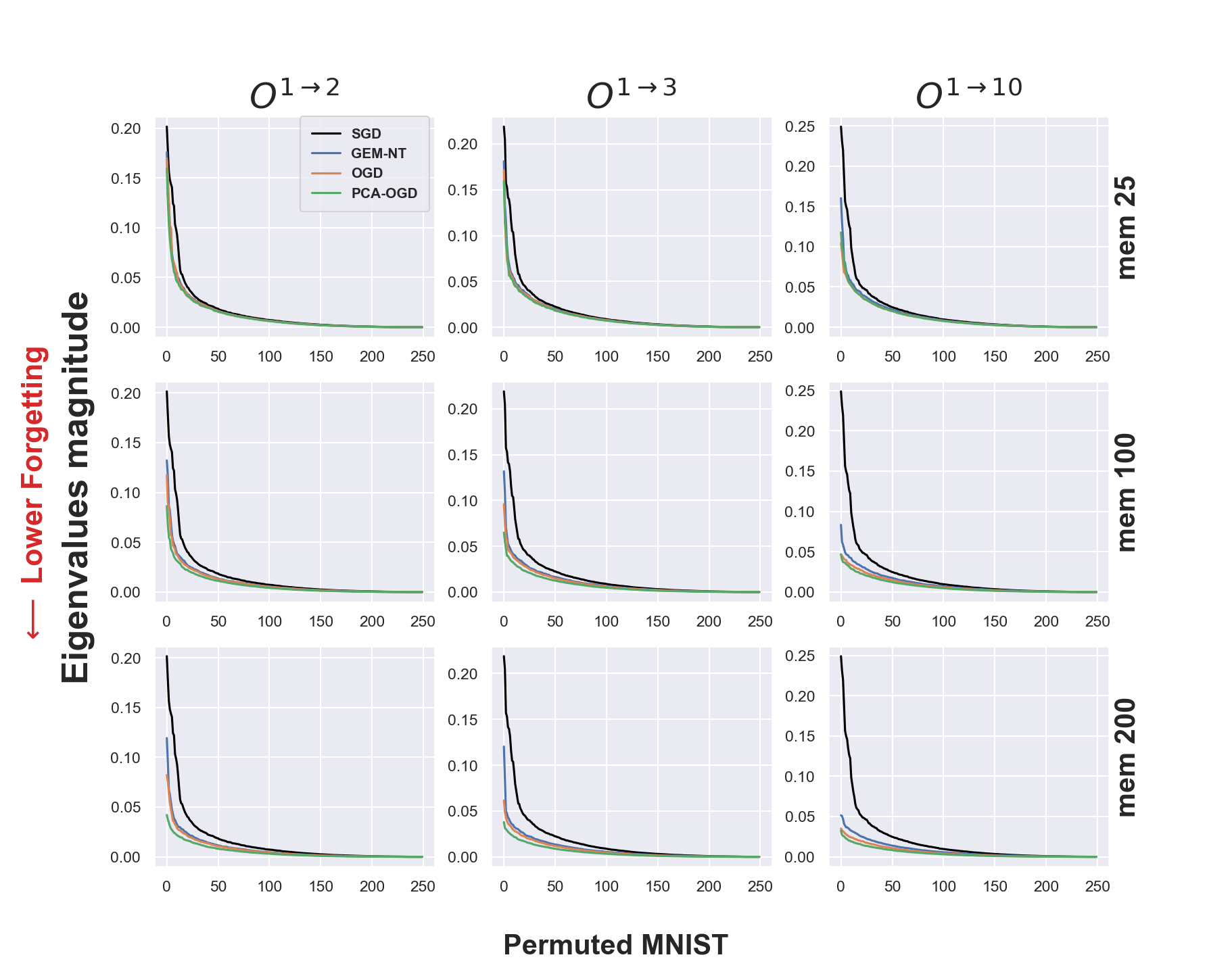}
    \caption{Eigenvalues of the overlap matrix $O^{\tau_S \rightarrow \tau_T}$ for different memozy size and methods. Since there is no Pattern accross task of Permuted MNIST, \algoname does not take advantage of keeping principal eigenvalues directions. }
\end{figure}

\newpage

\end{document}